\definecolor{lightblue}{RGB}{220,235,250}
\definecolor{sotacolor}{RGB}{198,239,206}
\newcommand{\legendpatch}[1]{\textcolor{#1}{\rule{8pt}{8pt}}}
\def\cref@section@alias{appendix}
\def\cref@subsection@alias{appendix}
\def\cref@subsubsection@alias{appendix}
\DeclareMathOperator*{\argmax}{arg\,max}
\DeclareMathOperator{\unif}{{unif}}
\renewcommand{\epsilon}{\varepsilon}
\newtheoremstyle{spaced}
  {6pt}   %
  {0pt}   %
  {\itshape} %
  {}       %
  {\bfseries} %
  {.}      %
  {0.5em}  %
  {}
\theoremstyle{spaced}
\newtheorem{proposition}{Proposition}
\newtheorem{remark}{Remark}
\newcommand{\algcommentlight}[1]{\textcolor{blue!70!black}{\transparent{0.5}\small{\texttt{\textbf{//\hspace{2pt}#1}}}}}
\DeclarePairedDelimiter{\abs}{\lvert}{\rvert} %
\DeclarePairedDelimiter{\crl}{\{}{\}}
\DeclarePairedDelimiter{\prn}{(}{)}
\DeclarePairedDelimiterX{\infdiv}[2]{(}{)}{%
  #1\;\delimsize\|\;#2%
}
\def\ddefloop#1{\ifx\ddefloop#1\else\ddef{#1}\expandafter\ddefloop\fi}
\def\ddef#1{\expandafter\def\csname bb#1\endcsname{\ensuremath{\mathbb{#1}}}}
\def\ddefloop#1{\ifx\ddefloop#1\else\ddef{#1}\expandafter\ddefloop\fi}
\def\ddef#1{\expandafter\def\csname b#1\endcsname{\ensuremath{\mathbf{#1}}}}
\def\ddef#1{\expandafter\def\csname sf#1\endcsname{\ensuremath{\mathsf{#1}}}}
\def\ddef#1{\expandafter\def\csname c#1\endcsname{\ensuremath{\mathcal{#1}}}}
\def\ddef#1{\expandafter\def\csname h#1\endcsname{\ensuremath{\widehat{#1}}}}
\def\ddef#1{\expandafter\def\csname hc#1\endcsname{\ensuremath{\widehat{\mathcal{#1}}}}}
\def\ddef#1{\expandafter\def\csname t#1\endcsname{\ensuremath{\widetilde{#1}}}}
\def\ddef#1{\expandafter\def\csname tc#1\endcsname{\ensuremath{\widetilde{\mathcal{#1}}}}}
\def\ddefloop#1{\ifx\ddefloop#1\else\ddef{#1}\expandafter\ddefloop\fi}
\def\ddef#1{\expandafter\def\csname scr#1\endcsname{\ensuremath{\mathscr{#1}}}}
\let\oldparagraph\paragraph
\renewcommand{\paragraph}[1]{\oldparagraph{#1}}
\renewcommand{\epsilon}{\varepsilon}
\newcommand{\ldef}{\vcentcolon=}
\renewcommand{\bigm}[1]{%
  \ifcsname fenced@\string#1\endcsname
    \expandafter\@firstoftwo
  \else
    \expandafter\@secondoftwo
  \fi
  {\expandafter\amsmath@bigm\csname fenced@\string#1\endcsname}%
  {\amsmath@bigm#1}%
}
\newcommand{\DeclareFence}[2]{\@namedef{fenced@\string#1}{#2}}
\let\save@mathaccent\mathaccent
\newcommand*\if@single[3]{%
  \setbox0\hbox{${\mathaccent"0362{#1}}^H$}%
  \setbox2\hbox{${\mathaccent"0362{\kern0pt#1}}^H$}%
  \ifdim\ht0=\ht2 #3\else #2\fi
  }
\newcommand*\rel@kern[1]{\kern#1\dimexpr\macc@kerna}
\newcommand*\widebar[1]{\@ifnextchar^{{\wide@bar{#1}{0}}}{\wide@bar{#1}{1}}}
\newcommand*\wide@bar[2]{\if@single{#1}{\wide@bar@{#1}{#2}{1}}{\wide@bar@{#1}{#2}{2}}}
\newcommand*\wide@bar@[3]{%
  \begingroup
  \def\mathaccent##1##2{%
    \let\mathaccent\save@mathaccent
    \if#32 \let\macc@nucleus\first@char \fi
    \setbox\z@\hbox{$\macc@style{\macc@nucleus}_{}$}%
    \setbox\tw@\hbox{$\macc@style{\macc@nucleus}{}_{}$}%
    \dimen@\wd\tw@
    \advance\dimen@-\wd\z@
    \divide\dimen@ 3
    \@tempdima\wd\tw@
    \advance\@tempdima-\scriptspace
    \divide\@tempdima 10
    \advance\dimen@-\@tempdima
    \ifdim\dimen@>\z@ \dimen@0pt\fi
    \rel@kern{0.6}\kern-\dimen@
    \if#31
      \overline{\rel@kern{-0.6}\kern\dimen@\macc@nucleus\rel@kern{0.4}\kern\dimen@}%
      \advance\dimen@0.4\dimexpr\macc@kerna
      \let\final@kern#2%
      \ifdim\dimen@<\z@ \let\final@kern1\fi
      \if\final@kern1 \kern-\dimen@\fi
    \else
      \overline{\rel@kern{-0.6}\kern\dimen@#1}%
    \fi
  }%
  \macc@depth\@ne
  \let\math@bgroup\@empty \let\math@egroup\macc@set@skewchar
  \mathsurround\z@ \frozen@everymath{\mathgroup\macc@group\relax}%
  \macc@set@skewchar\relax
  \let\mathaccentV\macc@nested@a
  \if#31
    \macc@nested@a\relax111{#1}%
  \else
    \def\gobble@till@marker##1\endmarker{}%
    \futurelet\first@char\gobble@till@marker#1\endmarker
    \ifcat\noexpand\first@char A\else
      \def\first@char{}%
    \fi
    \macc@nested@a\relax111{\first@char}%
  \fi
  \endgroup
}
\newcommand{\vqascoretext}{$\mathsf{VQAScore}$\xspace}
\newcommand{\simplematchtext}{$\mathsf{Simple Match}$\xspace}
\newcommand{\ttmtext}{$\mathsf{TTM}$\xspace}
\newcommand{\textscoremath}{{\mathsf{TextScore}}}
\newcommand{\groupscoremath}{{\mathsf{GroupScore}}}
\newcommand{\groupscoretext}{{$\groupscoremath$}\xspace}
\newcommand{\textscoretext}{{$\textscoremath$}\xspace}
\newcommand{\matchscoremath}{{\mathsf{GroupMatch}}}
\newcommand{\matchscoretext}{$\matchscoremath$\xspace}
\newcommand{\simmath}{{\mathsf{sim}}}
\newcommand{\ftmath}{{\mathsf{ft}}}
\newcommand{\whatsup}{\text{WhatsUp}\xspace}
\newcommand{\clipbaselow}{\text{CLIP-B32}\xspace}
\newcommand{\siglipbase}{\text{SigLIP-B16}\xspace}
\title{Test-Time Matching: Unlocking Compositional Reasoning in Multimodal Models}
\date{}
\author{
\begin{tabular}{ccc}
Yinglun Zhu\textsuperscript{\dag} & Jiancheng Zhang & Fuzhi Tang \\
  {\normalsize\texttt{yzhu@ucr.edu}} & {\normalsize\texttt{jzhan745@ucr.edu}} & {\normalsize\texttt{fuzhit@ucr.edu}} \\
  \multicolumn{3}{c}{{\normalsize University of California, Riverside}}\\[0.3em]
  \multicolumn{3}{c}{{\normalsize Code: \url{https://github.com/yinglunz/test-time-matching}}}
\end{tabular}
}
\begin{document}

\maketitle
\begingroup
\renewcommand\thefootnote{}\footnotetext{\textsuperscript{\dag}Project lead and corresponding author.}
\endgroup

\begin{abstract}
  
Frontier AI models have achieved remarkable progress, yet recent studies suggest they struggle with \emph{compositional reasoning}, often performing at or below random chance on established benchmarks. We revisit this problem and show that widely used evaluation metrics systematically \emph{underestimate} model capability. To correct this artifact, we introduce a \emph{group matching score} that more faithfully evaluates model capability. Moreover, correctness under the new metric can be translated into correctness under existing metrics via a simple overfitting step.
This adjustment enables SigLIP-B16 to surpass all previous results and GPT-4.1 to \emph{yield the first result surpassing estimated human performance on Winoground.}  
Building on this insight, we propose \emph{Test-Time Matching} (\ttmtext), an iterative, self-improving algorithm that further bootstraps model performance without any external supervision. \ttmtext delivers additional, non-trivial improvements: for example, \textbf{\ttmtext enables SigLIP-B16 to surpass GPT-4.1 on MMVP-VLM, establishing a new state of the art}. 
\ttmtext also extends beyond contrastive vision-language models, yielding clear gains on a generative multimodal model across benchmarks.
Importantly, \ttmtext remains broadly effective even on benchmarks without metric-induced effects or group structures, \textbf{achieving relative gains up to 85.7\%} on challenging datasets such as \whatsup. Across 16 dataset variants spanning diverse setups, our experiments demonstrate that \ttmtext consistently improves model performance and advances the frontier of compositional reasoning.

\end{abstract}

\section{Introduction}
\label{sec:intro}

Compositional reasoning provides a stringent test of frontier AI models, assessing their ability to systematically combine primitive elements---such as objects, attributes, and relations---to interpret or reason about novel configurations \citep{lake2017building, bahdanau2018systematic}.  
Recent benchmarks evaluate this capability by organizing examples into small groups of images and captions that differ in subtle yet systematic ways \citep{thrush2022winoground, hsieh2023sugarcrepe, kamath2023s, tong2024eyes, burapacheep2024colorswap}.  
For example, Winoground consists of $2\times2$ groups where both captions contain the same words but in different orders, such that each caption correctly describes only one of the two images.  

Despite the impressive practicality of modern multimodal systems, both contrastive vision-language models (VLMs) and multimodal large language models (MLLMs) have been reported to perform at or below random guessing on these benchmarks \citep{thrush2022winoground, diwan2022winoground, tong2024eyes, burapacheep2024colorswap, li2024exploring}.  
On Winoground, even frontier AI models still fall far short of the estimated human performance of $85.5$ \citep{thrush2022winoground}, with the previous state of the art reaching only $58.75$, achieved through scaffolding and prompt tuning GPT-4V \citep{wu2023role, vaishnav2025cognitive}.  

We revisit this conclusion and show that the widely used evaluation metric \groupscoretext \citep{thrush2022winoground, tong2024eyes, burapacheep2024colorswap} systematically  \emph{underestimates} model capability.  
We introduce a \emph{group matching score} (\matchscoretext) that more faithfully evaluates model capability.
Importantly, correctness under \matchscoretext can be translated into correctness under \groupscoretext by overfitting to the matchings induced by \matchscoretext, an approach we refer to as \simplematchtext (see \cref{sec:metric}).
\simplematchtext alone reveals substantial hidden capability: 
as shown in \cref{fig:main}, SigLIP-B16 improves from $10.25 \rightarrow 67$ on Winoground, $22.96 \rightarrow 81.48$ on MMVP-VLM, and $30.33 \rightarrow 88$ on ColorSwap, surpassing all previous results without access to additional data \citep{wu2023role, vaishnav2025cognitive, zhang2024vipact, burapacheep2024colorswap}.  
GPT-4.1 also improves dramatically, from $69.75 \rightarrow 91.38$ on Winoground, $68.15 \rightarrow 88.52$ on MMVP-VLM, and $91.08 \rightarrow 97.42$ on ColorSwap---\emph{yielding the first result to surpass the estimated human performance of 85.5 on Winoground} \citep{thrush2022winoground}.\footnote{We use GPT-4.1-2025-04-14, the most recent GPT model available to us that supports log-probability outputs, enabling more accurate computation of similarity scores \citep{lin2024evaluating}. At the time of writing (September 2025), GPT-5 did not support log-probability outputs.}

Building on this insight, we introduce \emph{Test-Time Matching} (\ttmtext), an iterative, self-improving algorithm that further bootstraps model performance without any external supervision. 
\ttmtext selects matching-induced pseudo-labels for self-training and progressively relaxes the selection threshold to expand coverage over the test set. 
This yields \emph{additional, non-trivial} gains on top of \simplematchtext: SigLIP-B16 reaches $72.5$ on Winoground, $89.44$ on MMVP-VLM, and $94.25$ on ColorSwap. 
Remarkably, \ttmtext elevates SigLIP-L16 to the level of GPT-4.1 on ColorSwap (\cref{tab:main_2x2}) and \textbf{enables SigLIP-B16 to surpass GPT-4.1 on MMVP-VLM, establishing a new state of the art.} 
See \cref{fig:main} and \cref{tab:main_2x2} for details. 
\ttmtext also extends beyond contrastive vision-language models, 
yielding clear gains on a generative multimodal model across benchmarks (\cref{tab:smolvlm}).
Crucially, \ttmtext is broadly effective even where metric changes cannot help---on $1\times k$ benchmarks such as SugarCrepe \citep{hsieh2023sugarcrepe} and \whatsup \citep{kamath2023s}, where \groupscoretext and \matchscoretext coincide, \ttmtext still delivers substantial test-time improvements, including \textbf{up to 85.7\% relative gains} on challenging datasets such as \whatsup (\cref{fig:exp_1xk}).
\looseness=-1

\begin{figure}[t]
  \centering
  \includegraphics[width=.9\linewidth]{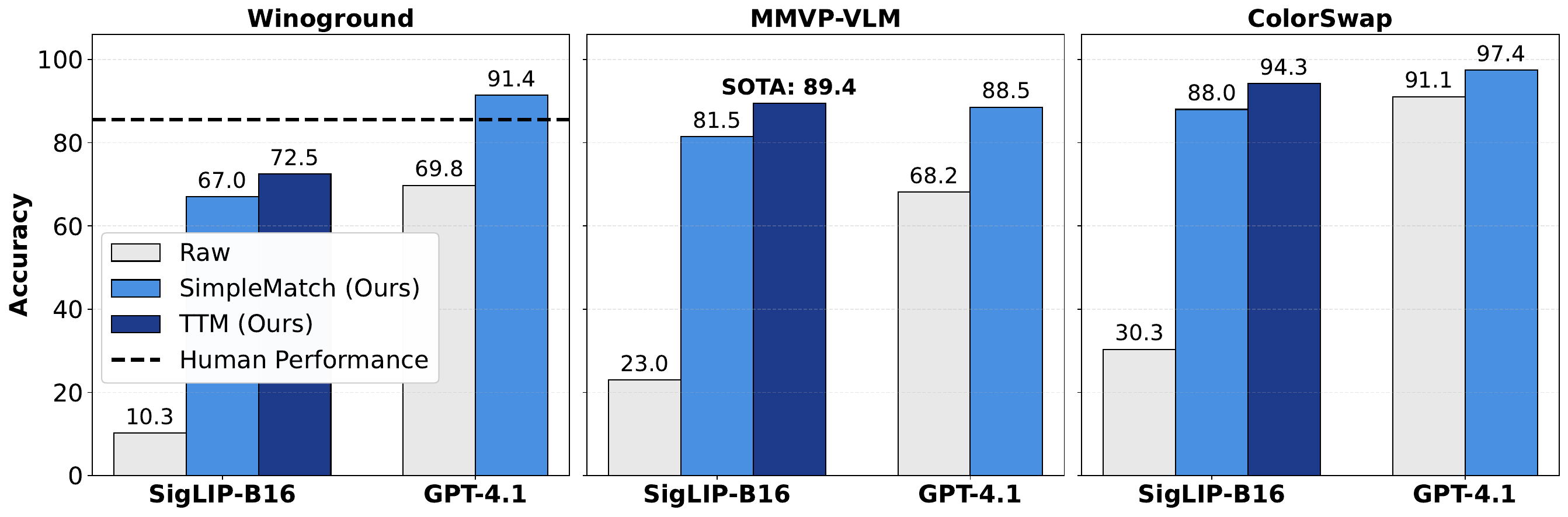}
  \caption{\simplematchtext and \ttmtext substantially improve VLM and MLLM performance on compositional reasoning benchmarks Winoground, MMVP-VLM, and ColorSwap, achieving new performance records. We highlight: (1) \simplematchtext enables GPT-4.1 to surpass human performance on Winoground (\emph{left}), and (2) \ttmtext enables SigLIP-B16 to surpass GPT-4.1 on MMVP-VLM, establishing a new state of the art (\emph{middle}).}
  \looseness=-1
  \label{fig:main}
  \vspace{-8 pt}
\end{figure}

Finally, we extend \ttmtext beyond group-structured datasets by formulating a single global matching across all images and captions.  
Even a one-shot global matching outperforms raw \groupscoretext, and applying the global variant of \ttmtext yields further improvements, demonstrating that the test-time matching principle generalizes robustly beyond benchmarks with group structures.

  \vspace{-5 pt}
\paragraph{Contributions.} We summarize our main contributions below:
\begin{enumerate}
  [leftmargin=10pt, itemindent=*]
  \item \textbf{Correcting evaluation metrics.} 
  We show that the widely used evaluation metric \groupscoretext systematically \emph{underestimates} model capability, and introduce \matchscoretext as a faithful measure.
  We further show that a simple overfitting step translates correctness under \matchscoretext into correctness under \groupscoretext, enabling GPT-4.1 to achieve the first Winoground result surpassing estimated human performance.

  \item \textbf{Test-time matching for self-improvements.} We propose \emph{Test-Time Matching} (\ttmtext), an iterative, self-improving algorithm 
that selects matching-induced pseudo-labels for self-training and progressively relaxes the selection threshold to expand coverage.
\ttmtext delivers additional, non-trivial gains on top of \simplematchtext, enabling SigLIP-B16 to surpass GPT-4.1 on MMVP-VLM and establishing a new state of the art.

  \item \textbf{Broad applicability of \ttmtext.} 
  We conduct extensive experiments across 16 dataset variants spanning $2\times2$, $1\times k$, and non-grouped settings, demonstrating that \ttmtext consistently improves model performance across diverse scenarios, including those without metric-induced effects or predefined group structures.
  \looseness=-1
\end{enumerate}

\paragraph{Paper organization.}
In \cref{sec:preliminary}, we review group-structured evaluation for compositional reasoning.
In \cref{sec:methods}, we revisit evaluation metrics, introduce a new group matching score (\matchscoretext), present our test-time matching (\ttmtext) algorithm, and extend it to global (non-grouped) settings.
In \cref{sec:experiments}, we report results on benchmarks with $2\times2$ groups, $1\times k$ groups, and non-grouped structures, together with ablations and analysis.
We discuss related work in \cref{sec:related} and conclude in \cref{sec:discussion}.
Formal proofs, additional experimental details, and extended results are provided in the Appendix.

\section{Preliminaries}
\label{sec:preliminary}

We study compositional reasoning in multimodal models.  
Benchmarks for this task are typically organized into \emph{groups} of images and captions, often of shape $k \times k$ or $1 \times k$.  
Within each group, the images and captions differ in subtle yet systematic ways.  
For example, the widely used Winoground dataset consists of groups with two images and two captions, where both captions contain the same set of words but in different orders, such that each caption correctly describes only one of the two images \citep{thrush2022winoground}.  

To succeed on these benchmarks, a model must correctly align images and captions within each group.
Let $s_{ij} \coloneqq s(I_i, C_j)$ denote the similarity score between image $I_i$ and caption $C_j$.  
For contrastive vision-language models such as CLIP \citep{radford2021learning} and SigLIP \citep{zhai2023sigmoid}, $s_{ij}$ is typically computed as the inner product of image and text embeddings.  
For multimodal large language models, similarity can instead be estimated using metrics such as \vqascoretext \citep{lin2024evaluating}.  
We collect all scores into a similarity matrix $s$, which shares the same shape as the group.  

\paragraph{The \groupscoretext metric for $k \times k$ groups.}  
Consider a group of $k$ images and $k$ captions with ground-truth pairings 
$\crl{\prn{I_i, C_i}}_{i=1}^k$ hidden from the learner. 
The most widely used evaluation metric is the \groupscoretext \citep{thrush2022winoground, tong2024eyes, burapacheep2024colorswap}.  
The \groupscoretext equals $1$ if the model's similarity scores admit a bijection such that (i) each image is assigned to its correct caption and (ii) each caption is assigned to its correct image; otherwise it equals $0$.  
Mathematically, we have
\begin{align}
\label{eq:group_score}
\groupscoremath(s) \ldef
\begin{cases}
1 & 
\forall i:\; s_{ii} > \max_{j \neq i} s_{ij}
\quad \text{and} \quad
s_{ii} > \max_{j \neq i} s_{ji}, \\[6pt]
0 & \text{otherwise}.
\end{cases}
\end{align}

\paragraph{Evaluation metrics for $1 \times k$ groups.}  
Without loss of generality, we assume each group consists of $1$ image and $k$ captions \citep{kamath2023s, hsieh2023sugarcrepe}.
In this case, the \groupscoretext reduces to the \textscoretext, which equals $1$ if the model selects the correct caption and 0 otherwise.

\paragraph{Scope and extensions.}
In this paper, we primarily focus on $k \times k$ and $1 \times k$ groups as they are the most common configurations in compositional reasoning benchmarks. We defer discussion of general rectangular groups of shape $m \times k$ to \cref{app:rectangular}.

\section{Methods}
\label{sec:methods}

In \cref{sec:metric}, we show that the standard evaluation metric can systematically \emph{undercount} model capability on group-structured benchmarks, and we introduce a group matching score (\matchscoretext) that corrects this artifact.
Building on this, we develop an iterative, self-improving \emph{Test-Time Matching} (\ttmtext) algorithm that bootstraps model performance without external supervision (\cref{sec:ttm}). 
Finally, we extend \ttmtext beyond group-structured datasets to a global matching formulation applicable to general settings (\cref{sec:global_matching}).

\subsection{Revisiting evaluation metrics: from random guessing to group matching}
  \label{sec:metric}

  Most compositional reasoning benchmarks use the \groupscoretext metric described in \cref{sec:preliminary}. 
Despite the broad practical success of frontier AI models, reported results on established benchmarks---particularly those with $k \times k$ groups---are often \emph{at or below random guessing} \citep{thrush2022winoground, diwan2022winoground, tong2024eyes, burapacheep2024colorswap, li2024exploring}.\footnote{These benchmarks are widely adopted; for example, as of October 2025, Winoground \citep{thrush2022winoground} has over 500 citations and MMVP-VLM \citep{tong2024eyes} has nearly 500 citations.}

\paragraph{Revisiting evaluation metrics.}
Such counter-intuitive results motivate us to re-examine evaluation metrics for $k \times k$ groups.
To calibrate their behavior, we analyze a \emph{random guessing model} under each metric.
Consider a group of $k$ images $\{I_i\}_{i=1}^k$ and $k$ captions $\{C_i\}_{i=1}^k$, with ground-truth pairings $\{(I_i, C_i)\}_{i=1}^k$ hidden from the learner \citep{thrush2022winoground, tong2024eyes, burapacheep2024colorswap}.  
For each pair $(I_i, C_j)$, the random guessing model assigns a similarity score $\simmath(I_i, C_j) \sim \unif([0,1])$,
producing a similarity matrix $s \in \bbR^{k \times k}$ with entries $s_{ij} \ldef \simmath(I_i, C_j)$.  

Under the widely used \groupscoretext metric, achieving a score of $1$ requires the similarity matrix $s$ to satisfy $2k^2 - 2k$ constraints (see \cref{eq:group_score}).  
Equivalently, each diagonal entry $s_{ii}$ must be the largest element in both its row and column---a highly restrictive condition.  
The probability of achieving a group score of 1 under random guessing is given below (see \cref{app:proofs_group} for proofs).  

\begin{restatable}{proposition}{propGroupScoreProb}
\label{prop:group_score_prob}
  For random similarity scores $s \in \bbR^{k \times k}$, 
  $\bbP(\groupscoremath(s) = 1) = \frac{(k-1)!}{(2k-1)!}$.
\end{restatable}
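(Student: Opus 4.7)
The plan is to condition on the diagonal entries and reduce the problem to an integral over order statistics of $k$ i.i.d.\ uniforms.

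First, I would rewrite the group-score condition in a symmetric pairwise form. The requirement that each $s_{ii}$ be the strict maximum of row $i$ and column $i$ is equivalent to: for every off-diagonal cell $(i, j)$ with $i \neq j$, $s_{ij} < \min(s_{ii}, s_{jj})$. Conditioning on the diagonal values $d_i = s_{ii}$, the $k(k-1)$ off-diagonal entries are i.i.d.\ $\unif([0,1])$ and independent of the $d_i$'s; moreover, the constraint attached to an unordered pair $\{i, j\}$ involves only the two cells $(i, j)$ and $(j, i)$ together with $d_i, d_j$. Hence the pair constraints are conditionally independent across $\{i, j\}$, which yields
\[
\bbP\bigl(\groupscoremath(s) = 1 \,\big|\, d_1, \ldots, d_k\bigr) \;=\; \prod_{i < j} \min(d_i, d_j)^2.
\]

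Next, I would collapse this product using the order statistics $d_{(1)} < \cdots < d_{(k)}$ of the diagonal (themselves the order statistics of $k$ i.i.d.\ $\unif([0,1])$ variables). Since $d_{(r)}$ is the smaller element of exactly $k - r$ unordered pairs, $\prod_{i < j} \min(d_i, d_j)^2 = \prod_{r=1}^{k-1} d_{(r)}^{2(k-r)}$. Taking expectations and using that the joint density of the order statistics is $k!$ on the simplex,
\[
\bbP\bigl(\groupscoremath(s) = 1\bigr) \;=\; k! \int_{0 < u_1 < \cdots < u_k < 1} \prod_{r=1}^{k-1} u_r^{2(k - r)} \, du_1 \cdots du_k.
\]

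The final step, and the main computational obstacle, is to evaluate this nested integral. I would proceed by iterated integration from $u_1$ outward, using the general identity
\[
\int_{0 < u_1 < \cdots < u_k < 1} \prod_{r=1}^{k} u_r^{a_r} \, du \;=\; \prod_{r=1}^{k} \frac{1}{A_r + r}, \qquad A_r = a_1 + \cdots + a_r,
\]
applied with $a_r = 2(k-r)$ for $r < k$ and $a_k = 0$. One checks $A_r + r = r(2k-r)$ for every $r$, so the integral evaluates to $\prod_{r=1}^{k} \tfrac{1}{r(2k-r)} = \tfrac{(k-1)!}{k!\,(2k-1)!}$, and multiplying by the $k!$ from the order-statistic density yields $\tfrac{(k-1)!}{(2k-1)!}$ as claimed. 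A minor subtlety is that the inequalities defining $\groupscoremath$ are strict, but ties among i.i.d.\ continuous variables occur with probability zero and can be ignored.
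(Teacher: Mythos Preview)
Your proof is correct and follows essentially the same route as the paper: condition on the diagonal, factor the conditional probability as $\prod_{i<j}\min(d_i,d_j)^2$, pass to order statistics, and evaluate the resulting simplex integral as $\prod_{r=1}^k \tfrac{1}{r(2k-r)}$. The only cosmetic difference is that the paper verifies the integral by an explicit induction on the number of variables, whereas you invoke the general monomial-on-simplex identity (which is proved by that same iterated integration); both arrive at the identical product and final value.
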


\paragraph{A new group matching score.}
We introduce a new evaluation metric that evaluates the \emph{best overall matching} rather than isolated pairwise comparisons.  
We consider \emph{bijective matchings} (one-to-one and onto) from images to captions.
Let $\pi$ denote such a matching, where $\pi(i)$ is the caption assigned to image $i$.  
We define the \matchscoretext as 
\looseness=-1
\begin{align*}
  \matchscoremath(s) \ldef 
  \begin{cases}
    1 & \text{if } \sum_{i=1}^k s_{i,\pi^\star(i)} > \sum_{i=1}^k s_{i,\pi(i)}, \quad \forall\; \pi \neq \pi^\star, \\[3pt]
    0 & \text{otherwise},
  \end{cases}
\end{align*}
where $\pi^\star: i \mapsto i$ denotes the ground-truth matching.  
Intuitively, the \matchscoretext equals $1$ if the \emph{total similarity} of the ground-truth matching exceeds that of all other possible matchings.  
For $k=2$, this reduces to the simple condition $s_{11} + s_{22} > s_{12} + s_{21}$.  
Since there are $k!$ distinct matchings (permutations) and, under random guessing, each is equally likely to maximize the total score, we obtain the following result.  

\begin{restatable}{proposition}{propMatchScoreProb}
\label{prop:match_score_prob}
  For random similarity scores $s \in \bbR^{k \times k}$, 
  $\bbP(\matchscoremath(s) = 1) = \frac{1}{k!}$.
\end{restatable}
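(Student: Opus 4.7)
\textbf{Proof plan for \cref{prop:match_score_prob}.} The plan is to reduce the statement to a clean symmetry (exchangeability) argument about which permutation maximizes $\sum_{i} s_{i,\pi(i)}$ over the $k!$ permutations $\pi$ of $[k]$. By definition, $\matchscoremath(s)=1$ precisely when the identity $\pi^\star$ is the \emph{unique} maximizer of $T(\pi)\ldef\sum_{i=1}^{k} s_{i,\pi(i)}$, so the target probability is $\bbP\bigl(\pi^\star = \argmax_{\pi} T(\pi)\bigr)$ with uniqueness understood.

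First I would dispose of ties: since the $k^2$ entries $s_{ij}$ are i.i.d.\ continuous (uniform on $[0,1]$) random variables, for any two distinct permutations $\pi\neq\pi'$ the difference $T(\pi)-T(\pi')$ is a nonzero integer combination of a nonempty subset of the $s_{ij}$'s, hence has a continuous distribution and equals $0$ with probability $0$. A union bound over the finitely many pairs of permutations shows that, almost surely, $T$ has a unique maximizer, so $\sum_{\pi} \bbP(\pi = \argmax T) = 1$.

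Next I would invoke a permutation-symmetry argument to show $\bbP(\pi = \argmax T)$ is the same for every $\pi$. Fix any $\sigma \in S_k$ and let $s^{\sigma}$ be the matrix obtained by permuting the columns of $s$ via $\sigma$, i.e.\ $s^{\sigma}_{ij} = s_{i,\sigma(j)}$. Because the entries of $s$ are i.i.d., $s^{\sigma}$ has the same joint distribution as $s$. Moreover, $T_{s^\sigma}(\pi) = \sum_i s_{i,\sigma(\pi(i))} = T_s(\sigma\circ\pi)$, so the maximizer of $T_{s^\sigma}$ is $\sigma^{-1}\circ\argmax T_s$. In particular, $\bbP(\argmax T_s = \pi^\star) = \bbP(\argmax T_{s^\sigma} = \pi^\star) = \bbP(\argmax T_s = \sigma)$, and since $\sigma$ was arbitrary, all $k!$ probabilities are equal. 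Combined with the previous paragraph, each equals $1/k!$, yielding $\bbP(\matchscoremath(s)=1) = 1/k!$.

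There is no real obstacle here; the only subtlety is making the ties/uniqueness step rigorous (ensuring the difference $T(\pi)-T(\pi')$ is genuinely nondegenerate so that its density exists) before invoking exchangeability. Everything else is a one-line symmetry check.
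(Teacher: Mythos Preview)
Your proposal is correct and follows essentially the same approach as the paper: dispose of ties using continuity of the $s_{ij}$'s, then use exchangeability under column permutations to conclude that each of the $k!$ permutations is equally likely to be the maximizer. Your write-up is in fact more careful than the paper's (which simply asserts ``by symmetry'' and ``ties occur with probability $0$''), so there is nothing to fix.
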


\begin{remark}
  The \matchscoretext naturally extends to general rectangular groups of shape $m \times k$ (with $m < k$) by considering all injective matchings (one-to-one).
In these cases, it also improves over the \groupscoretext, increasing the expected random guessing score from $1/k^m$ to $(k-m)!/k!$ (see \cref{app:rectangular} for details).
In the special case of $1 \times k$ groups, the \matchscoretext and the \groupscoretext coincide.
\end{remark}

\paragraph{\simplematchtext: translating \matchscoretext to \groupscoretext.}  
Two key observations emerge:  
\begin{itemize}
  [leftmargin=10pt, itemindent=*]
  \item 
  $\bbP(\matchscoremath(s) = 1) > \bbP(\groupscoremath(s) = 1)$ for all integers $k > 1$.  
  \item If the correct matching $\pi^\star$ is selected, overfitting to $\pi^\star$ at test time guarantees a \groupscoretext of $1$.  
  In other words, correctness under \matchscoretext can be translated into correctness under \groupscoretext.
\end{itemize}

Taken together, these observations imply that \textbf{\groupscoretext systematically underestimates model capability}: it imposes unnecessary constraints and can miscount correct matchings as wrong. 
In fact, one can easily improve model performance under \groupscoretext by 
(i) selecting the most likely matching under \matchscoretext and (ii) 
overfitting to that matching at test time to transfer gains.\footnote{Since overfitting to matchings induced by \matchscoretext achieves the same level of performance under \groupscoretext, throughout the paper, we report raw model performance under \groupscoretext and our algorithms' performance under \matchscoretext. The latter can always be converted to equivalent \groupscoretext performance with an additional overfitting step.}  
We refer to this approach as \simplematchtext with \matchscoretext.
In the commonly studied case with $k=2$, the expected group score of a random guessing model increases from $1/6$ to $1/2$.

\paragraph{Empirical validation.}
We evaluate \simplematchtext on SigLIP \citep{zhai2023sigmoid} and GPT-4.1 across three established compositional reasoning benchmarks with $k \times k$ group structures: Winoground \citep{thrush2022winoground}, MMVP-VLM \citep{tong2024eyes}, and Colorswap \citep{burapacheep2024colorswap}.  
Results are presented in \cref{fig:main}.
\simplematchtext reveals substantial hidden capability: 
SigLIP-B16 improves from $10.25 \rightarrow 67$ on Winoground, $22.96 \rightarrow 81.48$ on MMVP-VLM, and $30.33 \rightarrow 88$ on ColorSwap, surpassing all previous results without access to additional data \citep{wu2023role, vaishnav2025cognitive, zhang2024vipact, burapacheep2024colorswap}.  
GPT-4.1 also improves dramatically, from $69.75 \rightarrow 91.38$ on Winoground, $68.15 \rightarrow 88.52$ on MMVP-VLM, and $91.08 \rightarrow 97.42$ on ColorSwap---\emph{yielding the first result to surpass the estimated human performance of 85.5 on Winoground} \citep{thrush2022winoground}.

\subsection{Test-Time Matching: iterative bootstrapping of model performance}
\label{sec:ttm}

The \matchscoretext metric introduced in \cref{sec:metric} reveals substantial model capability masked by previous metrics. 
To push performance further, we introduce a test-time matching algorithm that iteratively bootstraps model performance, yielding new state-of-the-art results. 
Our method applies to groups of general shapes: we consider bijective matchings for square groups and injective matchings for rectangular groups.

\paragraph{High-level idea.}
Our test-time matching algorithm (\cref{alg:ttm}) proceeds iteratively for $T$ iterations.
At each round $t \in [T]$, the current model $f_{t-1}$ induces candidate matchings for all groups, which serve as pseudo-labels. 
The algorithm then retains only those matchings it is most confident about, and finetunes on them to obtain the next model $f_t$. 
By repeating this process, the model progressively self-improves directly at test time, without any external supervision.

\begin{algorithm}[tbp]
\caption{Test-Time Matching (\ttmtext)}
\label{alg:ttm}
\renewcommand{\algorithmicrequire}{\textbf{Input:}}
\renewcommand{\algorithmicensure}{\textbf{Output:}}
\newcommand{\algorithmicbreak}{\textbf{break}}
\newcommand{\BREAK}{\STATE \algorithmicbreak}
\begin{algorithmic}[1]
\REQUIRE Pretrained $f_0$; test set of groups $\mathcal{D}=\{G_i\}_{i=1}^n$; number of iterations $T$; thresholds $\{\tau_t\}_{t=1}^T$.
\FOR{iteration $t=1$ to $T$}
  \STATE Initialize pseudo-labeled set $\mathcal{S}_t \gets \emptyset$.
  \FOR{each group $G_i \in \mathcal{D}$}
    \STATE Induce matching $\pi_{f_{t-1}}(G_i) \gets \argmax_{\pi} s(\pi; G_i, f_{t-1})$.
    \STATE Compute margin $\Delta(G_i; f_{t-1})$ as 
  \vspace{-5pt}
    $$\Delta(G_i; f_{t-1}) \gets s(\pi_{f_{t-1}}(G_i); G_i, f_{t-1}) - \max_{\pi \neq \pi_{f_{t-1}}(G_i)} s(\pi; G_i, f_{t-1}).$$
  \vspace{-15pt}
    \IF{$\Delta(G_i; f_{t-1}) \ge \tau_t$}
      \STATE $\mathcal{S}_t \gets \mathcal{S}_t \cup \{(G_i, \pi_{f_{t-1}}(G_i))\}$.
    \ENDIF
  \ENDFOR
  \STATE Finetune model on $\mathcal{S}_t$ to obtain $f_t$. 
  \algcommentlight{Self-improving with no external supervision.}
\ENDFOR
\ENSURE Test-time adapted model $f_T$.
\end{algorithmic}
\end{algorithm}

The core of \cref{alg:ttm} lies in two design choices: 
(1) how pseudo-labels are induced within each group, and 
(2) how the confidence thresholds are scheduled across iterations. 
We discuss both below.

\paragraph{Group matching and pseudo-labeling.}
For a group $G$ and model $f_{t-1}$, we define the induced matching
$\pi_{f_{t-1}}(G) \ldef \argmax_{\pi} s(\pi; G, f_{t-1})$,
where
$s(\pi; G, f_{t-1}) \ldef \sum_{u} s_{u,\pi(u)}(G; f_{t-1})$
denotes the total similarity of matching $\pi$ on $G$ under $f_{t-1}$. 
For example, in a $2\times 2$ group, $\pi_{f_{t-1}}(G)=(1\!\mapsto\!1,\,2\!\mapsto\!2)$ if $s_{11}+s_{22} > s_{12}+s_{21}$, and $(1\!\mapsto\!2,\,2\!\mapsto\!1)$ otherwise. 
For a $1\times k$ group, the induced matching is $(1\!\mapsto\!\arg\max_{j\in[k]} s_{1j})$. 
We convert $\pi_{f_{t-1}}(G)$ into a pseudo-label $({G, \pi_{f_{t-1}}(G)})$ and add it to the training set $\cS_t$ only when its \emph{margin}
\begin{align*}
\Delta(G; f_{t-1}) \ldef s(\pi_{f_{t-1}}(G); G, f_{t-1}) - \max_{\pi \neq \pi_{f_{t-1}}(G)} s(\pi; G, f_{t-1})
\end{align*}
is greater than or equal to a threshold $\tau_t$. 
By controlling the threshold, we ensure that the model retains pseudo-labels it is sufficiently confident about.

\begin{figure}[t]
  \centering
  \begin{minipage}[t]{0.33\linewidth}
    \vspace{0pt}\centering
    \includegraphics[width=\linewidth]{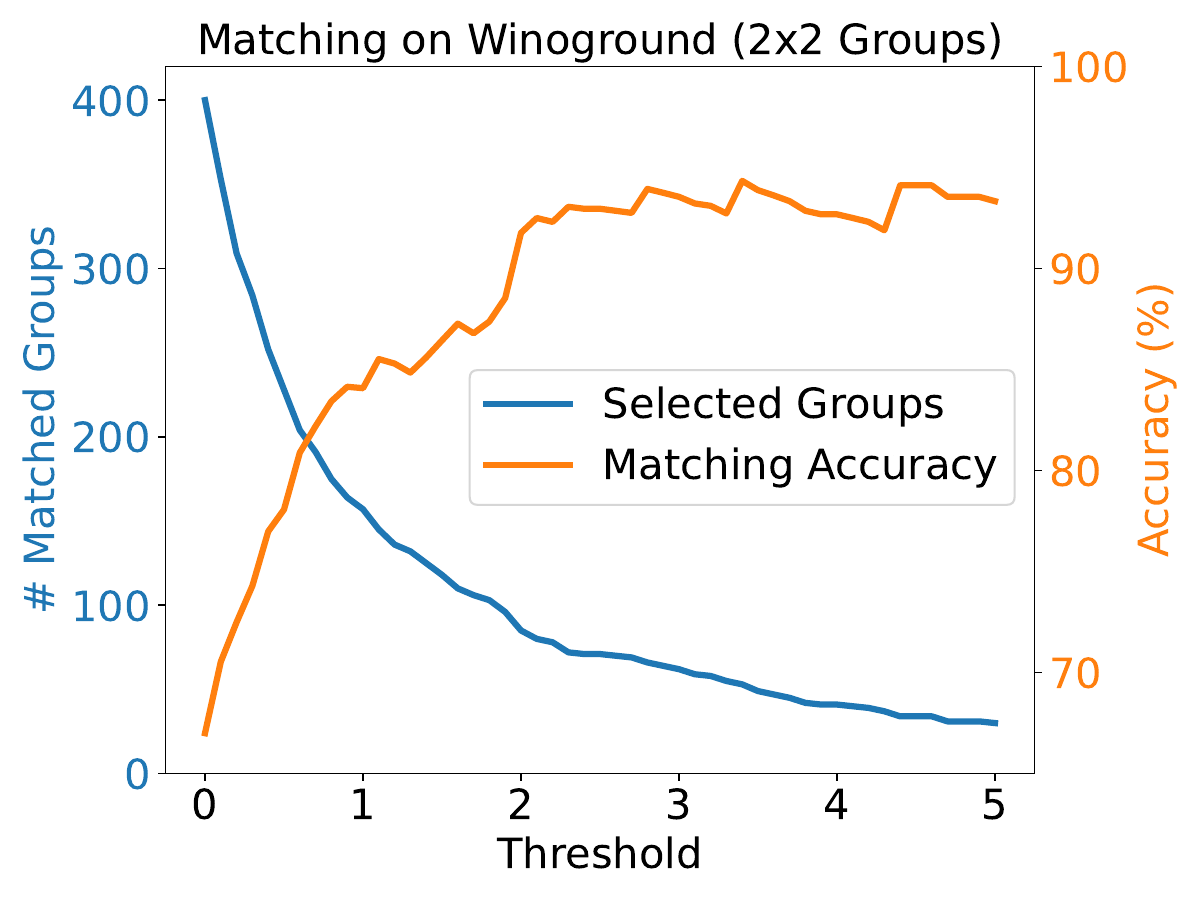}
  \end{minipage}\hfill
  \begin{minipage}[t]{0.33\linewidth}
    \vspace{0pt}\centering
    \includegraphics[width=\linewidth]{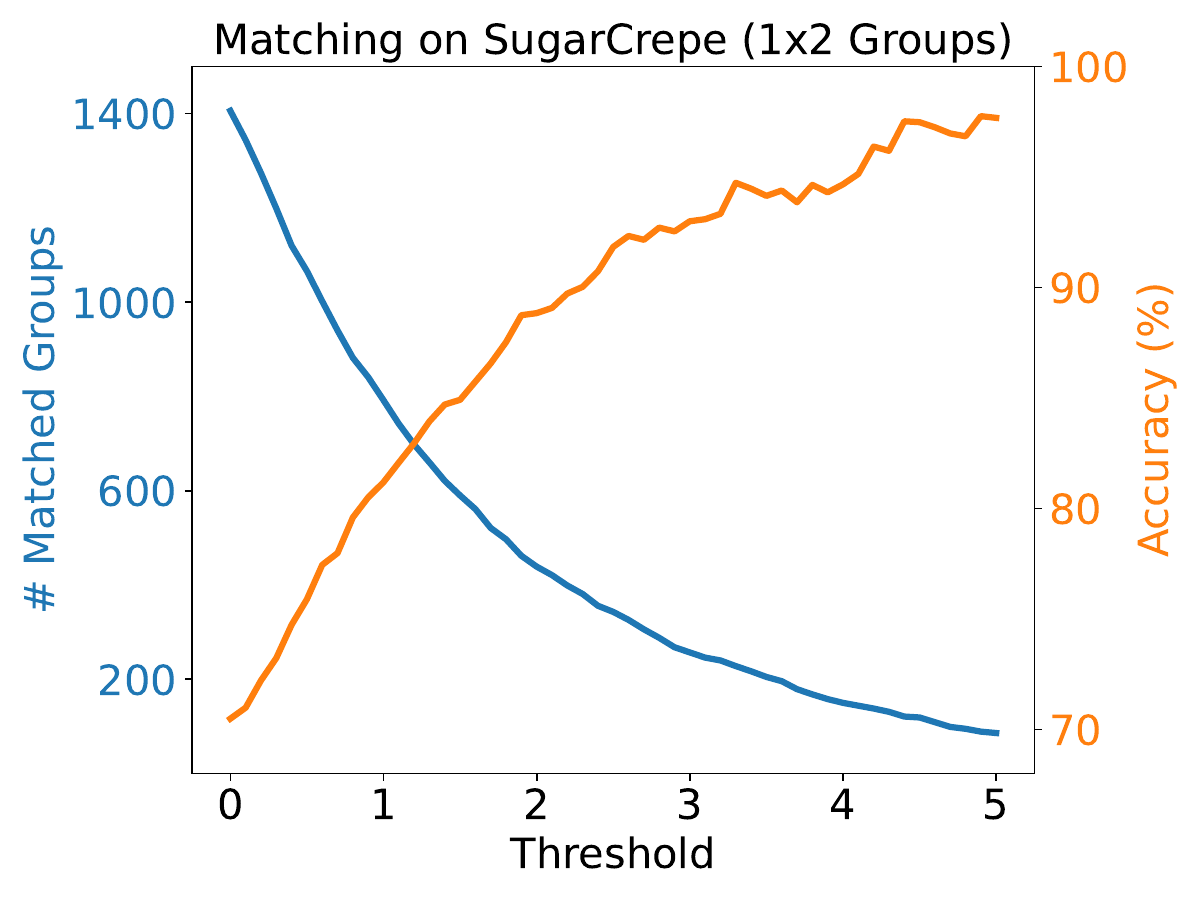}
  \end{minipage}\hfill
  \begin{minipage}[t]{0.31\linewidth}
    \vspace{0pt}\centering
    \includegraphics[width=\linewidth]{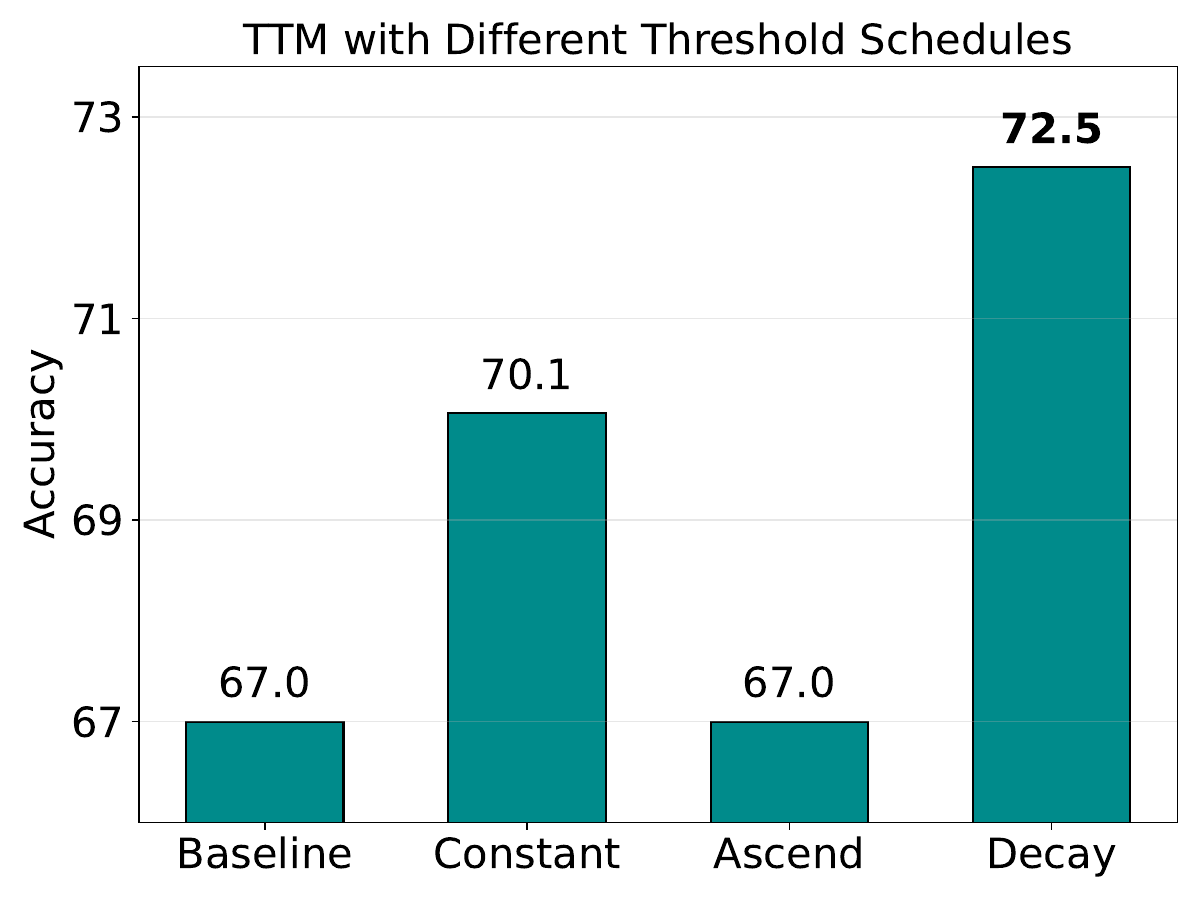}
  \end{minipage}
  \caption{
  \emph{Left and middle:} Matching results across different thresholds on Winoground and SugarCrepe (the Replace Relation subset) with SigLIP-B16.  
\emph{Right:} Performance of \ttmtext under different threshold schedules on Winoground with SigLIP-B16.  
  \emph{Baseline} denotes model performance without \ttmtext (under \matchscoretext).  
\emph{Constant} applies \ttmtext with a fixed threshold $\tau_t = 2.0$.  
\emph{Ascend} applies \ttmtext with a linearly increasing schedule from $\tau_1 = 0$ to $\tau_T = 2.0$, but yields no gains as the model quickly overfits to all pseudo-labels in the first iteration.  
\emph{Decay} applies \ttmtext with a linearly decreasing schedule from $\tau_1 = 2.0$ to $\tau_T = 0$, yielding the best performance.
  \looseness=-1
}
  \label{fig:match_threshold}
\end{figure}

\paragraph{A decaying selection threshold schedule.}
Any pseudo-label-based method (including \ttmtext) faces two types of pseudo-labeling errors:
(i) \emph{false positives}---incorrect pseudo-labels that are mistakenly included in the training set, and
(ii) \emph{false negatives}---correct predictions that are excluded due to overly strict selection criteria.
In matching-based pseudo-labeling, a lower threshold $\tau$ increases the number of selected pseudo-labels but typically reduces precision (more false positives),
whereas a higher threshold yields cleaner but fewer pseudo-labels (more false negatives).
This trade-off is illustrated in the left and middle plots of \cref{fig:match_threshold}, which report the number of matched groups (blue) and the accuracy among matched groups (orange) as a function of $\tau$.
\looseness=-1

\ttmtext employs a \emph{decaying} threshold schedule ($\tau_{t+1} < \tau_t$) to balance these errors over iterations:
it begins with a high threshold to ensure high-precision pseudo-labels (few false positives), and then gradually lowers the threshold to increase coverage and reduce false negatives as the model improves.
In contrast, an \emph{ascending} schedule can admit substantial false positives in early rounds, which derails learning and limits subsequent improvement; a \emph{constant} schedule avoids early false positives by keeping the threshold high, but induces many false negatives in later iterations, leading to limited gains and early plateauing.
The right plot of \cref{fig:match_threshold} supports this intuition: the decaying schedule consistently outperforms the alternatives.

In practice, we find it effective to set the initial threshold $\tau_1$ such that roughly 15\%--30\% of the groups are matched, and the final threshold $\tau_T$ such that more than 90\% of the test set is covered. 
Both cosine and linear decay schedules perform well. 
Further analyses and ablations are provided in \cref{sec:analyses}.
\looseness=-1

\paragraph{Runtime analysis.}
The runtime of \ttmtext scales as $O(T \cdot C_\ftmath)$, where $T$ is the number of iterations and $C_\ftmath$ denotes the per-iteration model finetuning cost.
Empirically, \ttmtext demonstrates strong improvements even with a small number of iterations (e.g., $T=3$ or $10$).
Thus, the runtime of \ttmtext is comparable to standard test-time training methods in the literature \citep{sun2020test, akyurek2025the}.
See \cref{app:runtime} for a detailed analysis and discussion.

\subsubsection{Test-Time Matching without group structures}
\label{sec:global_matching}

While \cref{alg:ttm} is designed for datasets organized into local groups, the same principle extends naturally to settings without any predefined group structure. 
In this case, we treat the entire dataset as a single global matching problem between all images and all captions.

Let $\cS_I$ denote the set of images and $\cS_C$ the set of captions. 
We assume $\abs{\cS_I} \le \abs{\cS_C}$ and each image has a unique corresponding caption (one-to-one assignment). 
Let $s \in \bbR^{\abs{\cS_I} \times \abs{\cS_C}}$ be the similarity matrix produced by a model $f$. 
We consider all injective matchings $\pi: \cS_I \rightarrow \cS_C$ from images to captions.
The model-induced global matching is then defined as
\begin{align}
  \label{eq:global_matching}
  \pi_f \ldef \argmax_{\pi:\,\cS_I \to \cS_C} \;
  \sum_{i \in \cS_I} s_{i,\pi(i)},
\end{align}
which maximizes the total similarity over image-caption pairs. 
\cref{eq:global_matching} corresponds to the classical \emph{assignment problem}, which can be efficiently solved by strongly-polynomial time algorithms such as the Hungarian algorithm \citep{kuhn1955hungarian}.

Analogous to \cref{alg:ttm}, we adopt an iterative schedule with pseudo-labeling. 
At iteration $t$, let $\pi_{f_{t-1}}$ be the global matching induced by model $f_{t-1}$. 
Because the entire dataset is treated as a single group, group-level margin thresholding loses granularity: the model would either accept all matches or none.
To address this, we apply thresholding at the level of individual pairs. 
Specifically, the pseudo-label set at iteration $t$ is
\begin{align*}
  \cS_t \ldef \crl[\bigr]{ (i, \pi_{f_{t-1}}(i)) :
    s_{i, \pi_{f_{t-1}}(i)}  \geq \tau_t
  },
\end{align*}
where $\tau_t$ is the threshold at iteration $t$. 
The threshold can be set either as an absolute value or relative to the distribution of similarity scores (i.e., the $p$-th percentile). 
Following the same principle as in \cref{alg:ttm}, we begin with a relatively high threshold to ensure high-precision pseudo-labels and gradually decay it over iterations to expand coverage and bootstrap performance over the test set.

\section{Experiments}
\label{sec:experiments}

We describe experimental setups in \cref{sec:exp_setups}, present main results in \cref{sec:exp_2x2,sec:exp_without_metric_boost,sec:exp_global_matching}, and provide analyses and ablations in \cref{sec:analyses}. Additional experimental details and results are deferred to \cref{app:exp}.

\subsection{Experimental setups}
\label{sec:exp_setups}

\paragraph{Datasets.}
We evaluate on five challenging compositional reasoning benchmarks: 
Winoground \citep{thrush2022winoground}, MMVP-VLM \citep{tong2024eyes}, 
Colorswap \citep{burapacheep2024colorswap}, SugarCrepe \citep{hsieh2023sugarcrepe}, 
and \whatsup \citep{kamath2023s}. 
Winoground, MMVP-VLM, and Colorswap consist of $2\times 2$ groups; we also construct their non-grouped variants by discarding group structures (\cref{sec:global_matching}). 
SugarCrepe consists of $1\times 2$ groups and \whatsup consists of $1\times 4$ groups; we evaluate on 4 different subsets of SugarCrepe and all 2 subsets of \whatsup. 
Following \citet{li2024exploring}, we further convert \whatsup into 4 different variants with $2\times 2$ groups. 
In total, our evaluation spans 16 dataset variations covering diverse structures and evaluation settings.

\paragraph{Models.}
We test both contrastive vision-language models and multimodal large language models.  
For contrastive models, we use SigLIP \citep{zhai2023sigmoid} and CLIP \citep{radford2021learning} at multiple scales, including SigLIP-B16, SigLIP-L16, CLIP-B16, and CLIP-B32.  
For multimodal large language models, we use GPT-4.1, where image-text similarity is computed based on \vqascoretext \citep{lin2024evaluating}. 

\paragraph{Evaluation metrics.}
For GPT-4.1, we report raw \groupscoretext and \matchscoretext-induced performance via \simplematchtext (\cref{sec:metric}).  
For CLIPs and SigLIPs, we additionally include results with \ttmtext (\cref{alg:ttm}).  
Specifically: on $2 \times 2$ datasets we report (i) raw \groupscoretext, (ii) \matchscoretext-induced performance, and (iii) \ttmtext-boosted performance; on $1 \times k$ datasets we report (i) raw \groupscoretext and (ii) \ttmtext-boosted performance, since \groupscoretext and \matchscoretext coincide in this case; and on datasets without group structures we report (i) raw \groupscoretext (with known groups), (ii) global assignment accuracy under \cref{eq:global_matching}, and (iii) \ttmtext-boosted performance via the global variant introduced in \cref{sec:global_matching}.  
In all cases, we highlight performance gains from \ttmtext---over \matchscoretext for $2 \times 2$ datasets, over \groupscoretext for $1 \times k$ datasets, and over global assignment accuracy under \cref{eq:global_matching} for datasets without group structures.
All results are averaged over four random runs, with standard deviations reported.

\subsection{\ttmtext achieves new SOTAs}
\label{sec:exp_2x2}

\begin{table}[t]
\centering
\caption{Performance on Winoground, MMVP-VLM, and ColorSwap. 
Raw model performance is reported under \groupscoretext. 
\simplematchtext corresponds to the performance under \matchscoretext (\cref{sec:metric}), 
and \ttmtext corresponds to the performance of \cref{alg:ttm}. 
We report absolute gains ($\Delta$), relative gains, and relative error reductions of \ttmtext over \simplematchtext.
Cells highlighted in \protect\legendpatch{lightblue} indicate results obtained with \ttmtext, 
while cells in \protect\legendpatch{sotacolor} denote the \textbf{SOTA} performance for each dataset.}
  \label{tab:main_2x2}
\begin{tabular}{l c c c c c}
\toprule
{Dataset / Model} & {Raw} & {\simplematchtext} & {\ttmtext} & $\Delta$ & {Error Red.} \\
\midrule
\multicolumn{6}{l}{\textbf{Winoground}} \\
\quad GPT-4.1     & 69.75\,\scriptsize$\pm$\,0.56 & \cellcolor{sotacolor}91.38\,\scriptsize$\pm$\,0.80 & -- & -- & -- \\
\quad CLIP-B16    & 7.25 & 60.00 & \cellcolor{lightblue}\textbf{65.44\,\scriptsize$\pm$\,1.10} 
  & \cellcolor{lightblue}{+}\,\textbf{5.4} \, (\textbf{9.1\%} $\uparrow$) 
  & \cellcolor{lightblue}\textbf{13.6\%} $\downarrow$ \\
\quad SigLIP-B16  & 10.25  & 67.00 & \cellcolor{lightblue}\textbf{72.50\,\scriptsize$\pm$\,0.64} 
  & \cellcolor{lightblue}{+}\,\textbf{5.5} \, (\textbf{8.2\%} $\uparrow$) 
  & \cellcolor{lightblue}\textbf{16.7\%} $\downarrow$ \\
\quad SigLIP-L16  & 13.00 & 69.50 & \cellcolor{lightblue}\textbf{72.75\,\scriptsize$\pm$\,0.64} 
  & \cellcolor{lightblue}{+}\,\textbf{3.3} \, (\textbf{4.7\%} $\uparrow$) 
  & \cellcolor{lightblue}\textbf{10.7\%} $\downarrow$ \\
  \midrule
  \multicolumn{6}{l}{\textbf{MMVP-VLM}} \\
\quad GPT-4.1     & 68.15\,\scriptsize$\pm$\,0.00 & 88.52\,\scriptsize$\pm$\,0.83 & -- & -- & -- \\
\quad CLIP-B16    & 5.19  & 72.59 & \cellcolor{lightblue}\textbf{80.19\,\scriptsize$\pm$\,0.81} 
  & \cellcolor{lightblue}{+}\,\textbf{7.6} \, (\textbf{10.5\%} $\uparrow$) 
  & \cellcolor{lightblue}\textbf{27.7\%} $\downarrow$ \\
\quad SigLIP-B16  & 22.96 & 81.48 & \cellcolor{sotacolor}\textbf{89.44\,\scriptsize$\pm$\,0.96} 
  & \cellcolor{lightblue}{+}\,\textbf{8.0} \, (\textbf{9.8\%} $\uparrow$) 
  & \cellcolor{lightblue}\textbf{43.0\%} $\downarrow$ \\
\midrule
\multicolumn{6}{l}{\textbf{ColorSwap}} \\
  \quad GPT-4.1     & 91.08\,\scriptsize$\pm$\,0.28 & \cellcolor{sotacolor}{97.42\,\scriptsize$\pm$\,0.14} & -- & -- & -- \\
  
\quad CLIP-B16    & 12.00 & 77.67 & \cellcolor{lightblue}\textbf{85.75\,\scriptsize$\pm$\,0.64} 
  & \cellcolor{lightblue}{+}\,\textbf{8.1} \, (\textbf{10.4\%} $\uparrow$) 
  & \cellcolor{lightblue}\textbf{36.2\%} $\downarrow$ \\
\quad SigLIP-B16  & 30.33 & 88.00 & \cellcolor{lightblue}\textbf{94.25\,\scriptsize$\pm$\,0.43} 
  & \cellcolor{lightblue}{+}\,\textbf{6.3} \, (\textbf{7.1\%} $\uparrow$) 
  & \cellcolor{lightblue}\textbf{52.1\%} $\downarrow$ \\
\quad SigLIP-L16  & 37.00 & 91.33 & \cellcolor{lightblue}\textbf{96.08\,\scriptsize$\pm$\,0.43} 
  & \cellcolor{lightblue}{+}\,\textbf{4.8} \, (\textbf{5.2\%} $\uparrow$) 
  & \cellcolor{lightblue}\textbf{54.8\%} $\downarrow$ \\
\bottomrule
\end{tabular}
\end{table}

We evaluate on three established compositional reasoning benchmarks---Winoground, MMVP-VLM, and ColorSwap---all consisting of $2 \times 2$ groups and considered challenging for frontier AI models. 
Previous state-of-the-art results include $58.75$ on Winoground (GPT-4V with prompt tuning \citep{wu2023role, vaishnav2025cognitive}), $70.7$ on MMVP (via a GPT-4o multi-agent system with tool use \citep{zhang2024vipact}),\footnote{This result is on MMVP, a variant of MMVP-VLM formulated as binary-choice question answering. In this paper, we focus on MMVP-VLM, which is better suited for contrastive models. Prior work has shown that model performance on the two variants is positively correlated \citep{tong2024eyes, li2024exploring}.}
and $87.33$ on ColorSwap without training-set access ($95.33$ with finetuning on the training set \citep{burapacheep2024colorswap}).
\looseness=-1

\paragraph{Simple matching reveals hidden capabilities.}  
Applying \simplematchtext (\cref{sec:metric}) to CLIP, SigLIP, and GPT-4.1 already yields striking improvements (\cref{tab:main_2x2}).  
\simplematchtext enables SigLIP-B16 to surpass all prior state-of-the-art results without access to additional data,   
and enables GPT-4.1 to set new records across all three benchmarks.  
Notably, GPT-4.1 improves from 69.75 to 91.38 on Winoground, \emph{yielding the first result to surpass the estimated human performance of 85.5} \citep{thrush2022winoground}.  
These findings confirm that the \matchscoretext metric can reveal substantial hidden compositional reasoning capabilities.  
\looseness=-1

\paragraph{Test-time matching further boosts performance.}  
We next apply \ttmtext (\cref{alg:ttm}) to CLIP and SigLIP, enabling additional performance gains without external supervision.  
As shown in \cref{tab:main_2x2}, \emph{\ttmtext consistently improves over \simplematchtext across datasets and model scales, with relative gains up to 10.5\% and relative error reduction up to 54.8\%}.\footnote{While the absolute boosts may appear modest compared to \matchscoretext-induced gains, they are \emph{highly significant}: for comparison, scaffolding GPT-4V yields only a 1.25-point gain on the Winoground dataset, improving performance from 50.75 \citep{zhang2024cocot} to 52 \citep{vaishnav2025cognitive}.}  
Crucially, \ttmtext elevates SigLIP-L16 to the level of GPT-4.1 on ColorSwap and \textbf{enables SigLIP-B16 to surpass GPT-4.1 on MMVP-VLM, establishing a new state of the art}.  
These results demonstrate that \ttmtext is a powerful and practical approach for enhancing model performance through self-improvement at test time.

\subsection{\ttmtext also improves generative multimodal models}
\label{sec:smolvlm}
Beyond contrastive vision-language models, we also apply \ttmtext (\cref{alg:ttm}) to a small yet capable generative multimodal model, SmolVLM-256M-Instruct \citep{marafioti2025smolvlm}. We compute the similarity score using \vqascoretext \citep{lin2024evaluating} with the prompt: \texttt{<image>} Does this image show ``\texttt{<text>}''? Please answer ``Yes'' or ``No''.
During training, the pseudo-labeled dataset includes both ``Yes'' and ``No'' responses.
The ``No'' examples are constructed from mismatched image-caption pairs within the same group, providing hard negatives without requiring additional data sources. 
Results are provided in \cref{tab:smolvlm}.
Although these benchmarks are primarily designed for contrastive vision-language models (e.g., their short captions are not always natural for generative MLLMs), \ttmtext still yields clear gains over \simplematchtext, with the significant improvements on MMVP-VLM and ColorSwap.

\begin{figure}[t]
  \centering
  \includegraphics[width=.9\linewidth]{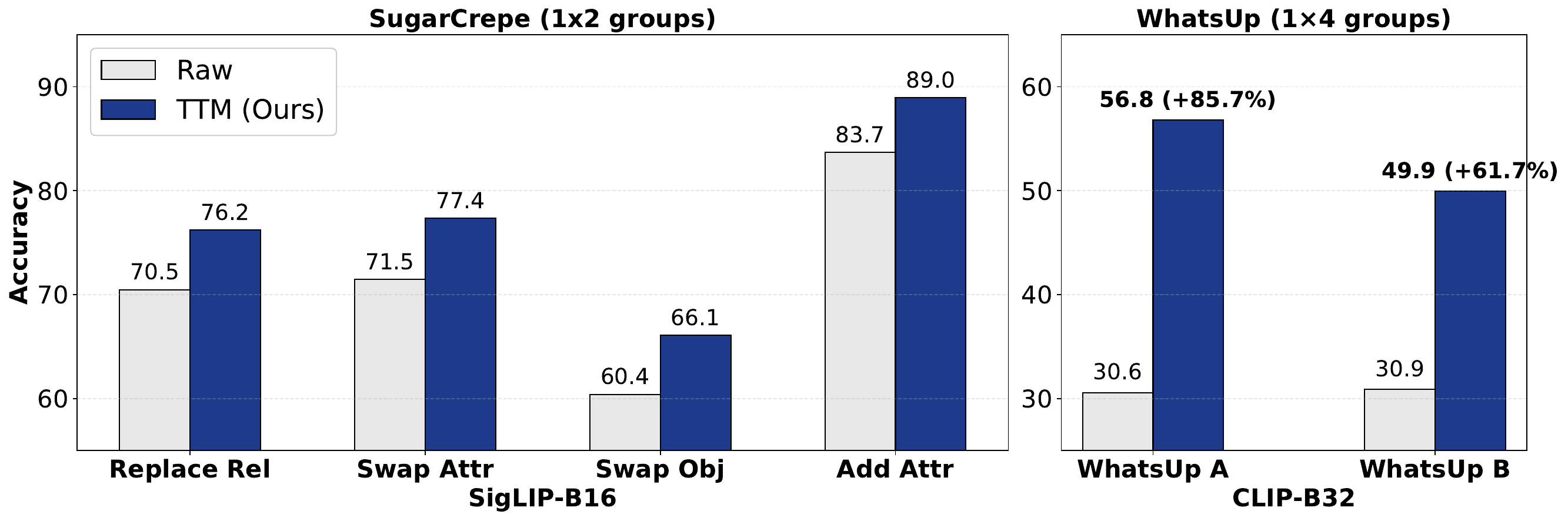}
  \caption{\ttmtext results on benchmarks without metric-induced boosts: for $1 \times k$ groups, \matchscoretext (and thus \simplematchtext) coincide with \groupscoretext. \emph{Left:} results on four SugarCrepe subsets consisting of $1 \times 2$ groups. \emph{Middle:} results on both \whatsup subsets consisting of $1 \times 4$ groups.}
  \label{fig:exp_1xk}
\end{figure}

\begin{table}[t]
\centering
  \caption{Performance of SmolVLM-256M-Instruct on Winoground, MMVP-VLM, and ColorSwap.
  Raw model performance is reported under \groupscoretext,
\simplematchtext corresponds to the performance under \matchscoretext (\cref{sec:metric}), 
and \ttmtext corresponds to the performance of \cref{alg:ttm}.
  We report absolute gains ($\Delta$), relative gains, and relative error reduction of \ttmtext over \simplematchtext.}
  \label{tab:smolvlm}
\begin{tabular}{l c c c c c}
\toprule
{Datasets} & {{SmolVLM}} & {\simplematchtext} & {+\,\ttmtext} & $\Delta$ & {Error  Red.} \\
\midrule
Winoground
  & 7.25
  & 61.75
  & \cellcolor{lightblue}\textbf{63.38\,\scriptsize$\pm$\,0.67}
  & \cellcolor{lightblue}{+}\,\textbf{1.6} \, (\textbf{2.6\%} $\uparrow$)
  & \cellcolor{lightblue}\textbf{4.3\%} $\downarrow$ \\
MMVP-VLM
  & 20.00
  & 76.30
  & \cellcolor{lightblue}\textbf{81.67\,\scriptsize$\pm$\,1.52}
  & \cellcolor{lightblue}{+}\,\textbf{5.4} \, (\textbf{7.0\%} $\uparrow$)
  & \cellcolor{lightblue}\textbf{22.7\%} $\downarrow$ \\
ColorSwap
  & 30.00
  & 80.00
  & \cellcolor{lightblue}\textbf{85.17\,\scriptsize$\pm$\,1.09}
  & \cellcolor{lightblue}{+}\,\textbf{5.2} \, (\textbf{6.5\%} $\uparrow$)
  & \cellcolor{lightblue}\textbf{25.9\%} $\downarrow$ \\
\bottomrule
\end{tabular}
\end{table}

\subsection{\ttmtext remains effective without metric-induced boosts}
\label{sec:exp_without_metric_boost}

To evaluate the effectiveness of \cref{alg:ttm} beyond cases where alternative metrics can inflate performance, we consider benchmarks with $1 \times k$ group structure, where \groupscoretext and \matchscoretext coincide and thus provide no metric-induced boost.

We experiment on 4 SugarCrepe subsets ($1 \times 2$ groups) and all 2 \whatsup subsets ($1 \times 4$ groups), reporting results in \cref{fig:exp_1xk}. 
Even without metric-induced gains, \cref{alg:ttm} consistently delivers substantial test-time improvements. 
The gains are especially striking on the \whatsup datasets, where \textbf{performance improves by up to 85.7\%}, turning these previously challenging tasks into tractable ones.  

Following \citet{li2024exploring}, we further convert the \whatsup datasets into 4 directional variants with $2 \times 2$ group structures. 
As shown in \cref{tab:whatsup_variants} (in \cref{app:exp_results}), \cref{alg:ttm} again yields significant improvements---\textbf{up to 135.1\% relative gains and 95.5\% relative error reduction}---on top of \simplematchtext. 
Together, these results demonstrate that \ttmtext is broadly effective across both $k \times k$ and $1 \times k$ groups, even when metric-induced effects are absent, as in the case of $1 \times k$ groups.

\subsection{\ttmtext extends beyond group-structured datasets}
\label{sec:exp_global_matching}

\begin{table}[t]
\centering
  \caption{Performance on non-grouped variants of Winoground, MMVP-VLM, and ColorSwap.
  Raw model performance is reported under \groupscoretext, 
  \simplematchtext corresponds to the performance of global assignment defined in \cref{eq:global_matching}, 
  and \ttmtext corresponds to the performance of the global variant of \cref{alg:ttm}. 
  We report absolute gains ($\Delta$), relative gains, and relative error reduction of \ttmtext over \simplematchtext.}
  \label{tab:global}
\begin{tabular}{l c c c c c}
\toprule
{Datasets} & {SigLIP-B16} & {\simplematchtext} & {+\,\ttmtext} & $\Delta$ & {Error  Red.  } \\
\midrule
Winoground & 10.25 & 44.38 & \cellcolor{lightblue}\textbf{46.78\,\scriptsize$\pm$\,1.05} 
  & \cellcolor{lightblue}{+}\,\textbf{2.4} \, (\textbf{5.4\%} $\uparrow$) 
  & \cellcolor{lightblue}\textbf{4.3\%} $\downarrow$ \\
MMVP-VLM  & 22.96 & 39.63 & \cellcolor{lightblue}\textbf{44.54\,\scriptsize$\pm$\,2.02} 
  & \cellcolor{lightblue}{+}\,\textbf{4.9} \, (\textbf{12.4\%} $\uparrow$) 
  & \cellcolor{lightblue}\textbf{8.1\%} $\downarrow$ \\
ColorSwap & 30.33 & 88.00 & \cellcolor{lightblue}\textbf{92.00\,\scriptsize$\pm$\,1.24} 
  & \cellcolor{lightblue}{+}\,\textbf{4.0} \, (\textbf{4.5\%} $\uparrow$) 
  & \cellcolor{lightblue}\textbf{33.3\%} $\downarrow$ \\
\bottomrule
\end{tabular}
\end{table}

To further assess the generality of \cref{alg:ttm}, we evaluate its global variant introduced in \cref{sec:global_matching} on datasets \emph{without any predefined group structures.} 
Specifically, we flatten Winoground, MMVP-VLM, and ColorSwap by removing local $k \times k$ groups, resulting in a general dataset with an image set $\cS_I$ and a caption set $\cS_C$. 
\looseness=-1

We report three metrics: (i) raw \groupscoretext (with the extra knowledge of the group structure), (ii) global assignment accuracy obtained via \simplematchtext under \cref{eq:global_matching}, and (iii) \ttmtext-boosted performance achieved using the global variant introduced in \cref{sec:global_matching}. 
Results show that even global assignment without group structures substantially outperforms the vanilla \groupscoretext, demonstrating the effectiveness of using \emph{matching-based supervision} to generate high-quality pseudo-labels.
More importantly, applying the iterative global TTM algorithm yields further gains over global assignment alone, with especially large relative error reductions on ColorSwap, i.e., \textbf{33.3\% relative error reduction on ColorSwap} (see \cref{tab:global}). This demonstrates that the test-time matching principle generalizes effectively beyond group-structured datasets.

\subsection{Analyses and ablations}
\label{sec:analyses}

\paragraph{Group matching provides strong supervision signals.}
The key advantage of \matchscoretext over \groupscoretext lies in its ability to leverage matching within local groups.
To assess the benefits of matching and group structure, we examine the raw performance of CLIP-B16 and SigLIP-B16 under different evaluation metrics.
In addition to \groupscoretext and \matchscoretext, we consider (i) \emph{global matching under \cref{eq:global_matching}}, which performs matching but ignores group structure, and (ii) \emph{individual matching within groups}, which preserves group structure but doesn't perform matching: it assign captions to images independently within the group. 
As shown in the left plot of \cref{fig:analyses},
\matchscoretext provides the strongest supervision signal among all metrics, making it most effective for guiding pseudo-labeling.

\paragraph{Skyline performance with oracle matching.}
To study the full potential of \ttmtext, we evaluate an oracle variant that incorporates pseudo-labels into $\cS_t$ if and only if they are correct (i.e., with oracle access). 
As shown in the middle plot of \cref{fig:analyses}, this oracle variant enables \ttmtext to bootstrap more aggressively, approaching human-level performance on Winoground. 
This suggests that improving pseudo-label quality---potentially through the incorporation of external supervision---could further enhance the effectiveness of \ttmtext.

\begin{figure}[t]
  \centering
  \includegraphics[width=.30\linewidth]{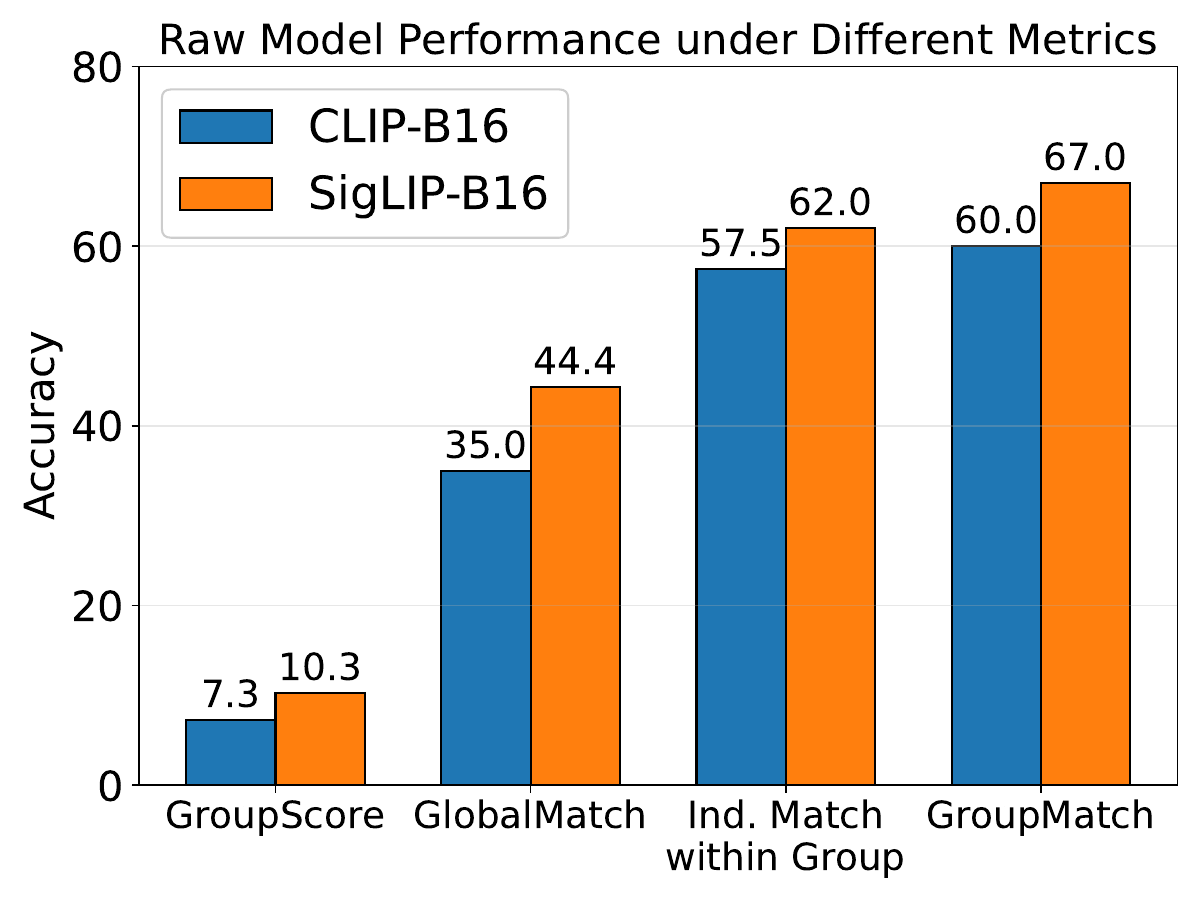}\hfill
  \includegraphics[width=.30\linewidth]{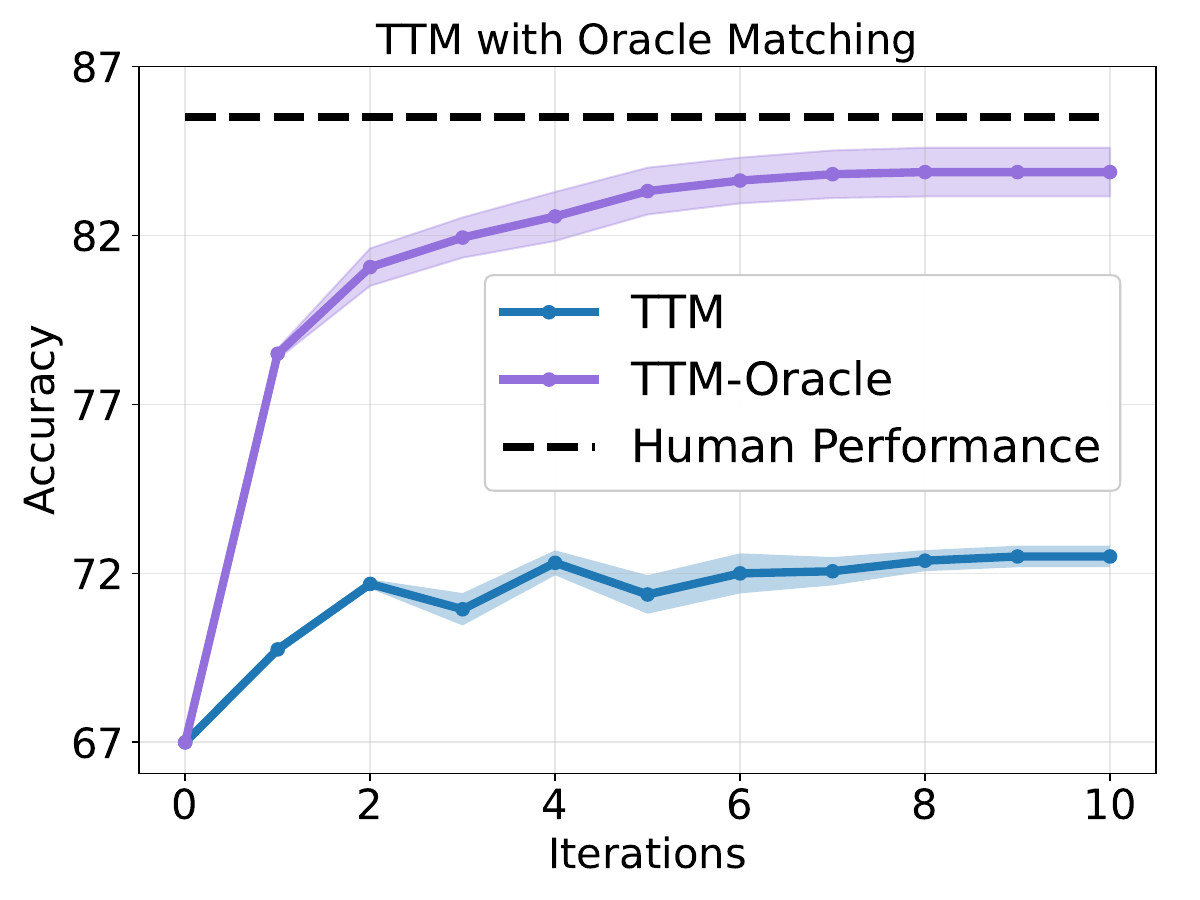}\hfill
  \includegraphics[width=.30\linewidth]{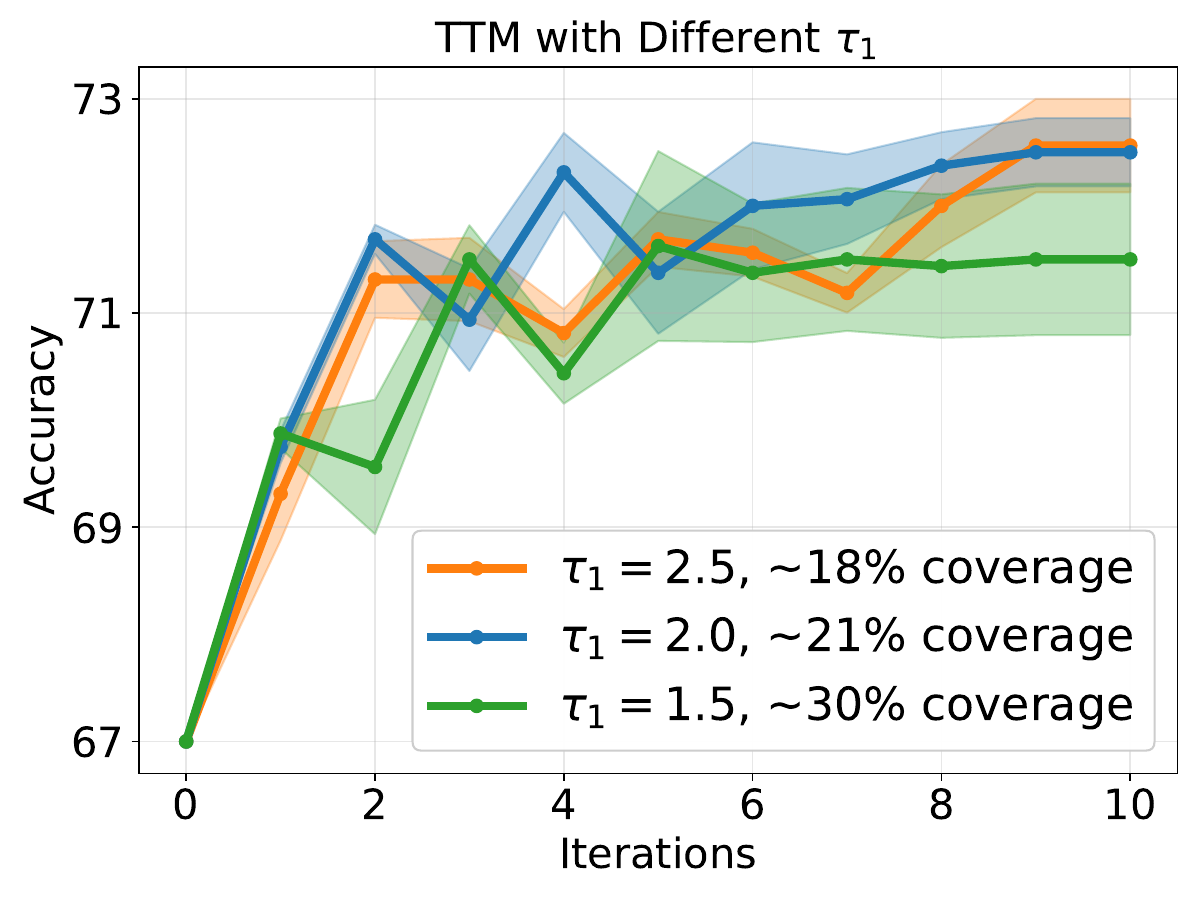}\hfill
  \caption{\emph{Left:} Raw performance of CLIP-B16 and SigLIP-B16 on Winoground under different evaluation metrics. 
  \emph{Middle:} Skyline performance of \ttmtext with oracle matching on Winoground with SigLIP-B16, illustrating the upper bound achievable by \ttmtext. 
  \emph{Right:} Effect of the initial threshold $\tau_1$ on \ttmtext performance, evaluated on Winoground with SigLIP-B16.}
  \label{fig:analyses}
\end{figure}

\paragraph{Threshold selection for \ttmtext.}
As discussed in \cref{sec:ttm}, we adopt a decaying threshold schedule that begins with high-quality pseudo-labels and gradually expands coverage. 
In our experiments, the final threshold $\tau_T$ is set to either $0$ (full coverage) or $0.1$ (typically covering more than 90\% of the data). 
The optimal initial threshold $\tau_1$ is more dataset- and model-dependent.
If a training set or hold-out split is available, $\tau_1$ can be selected based on matching results on that data (e.g., see the left and middle plots of \cref{fig:match_threshold}).
Otherwise, we find it effective to set $\tau_1$ such that roughly 15\%–30\% of the groups are matched initially.
The right plot of \cref{fig:analyses} shows \ttmtext results on Winoground with SigLIP-B16 for $\tau_1 \in \{2.5, 2, 1.5\}$, corresponding to roughly $\{18\%, 21\%, 30\%\}$ initial coverage. 
While performance varies slightly across these choices, all yield consistent gains, highlighting that \ttmtext robustly improves model performance at test time.
For the global matching variant, we find it effective to set $\tau_1$ such that about 50\% of the data are pseudo-labeled initially. 
See \cref{app:exp_hyper} for further discussion and complete hyperparameter settings used in our experiments.

\section{Related work}
\label{sec:related}

\paragraph{Compositional reasoning and evaluation metrics.}
Contrastive vision-language models (VLMs) such as CLIP \citep{radford2021learning} and SigLIP \citep{zhai2023sigmoid}, and multimodal large language models (MLLMs) such as the GPT \citep{achiam2023gpt, hurst2024gpt} and Gemini \citep{team2023gemini, comanici2025gemini} series, have achieved remarkable progress across a wide range of multimodal tasks.  
Yet both VLMs and MLLMs struggle on benchmarks specifically designed to test \emph{compositional reasoning}---the ability to systematically combine objects, attributes, and relations to interpret or reason about novel configurations \citep{lake2017building, bahdanau2018systematic, thrush2022winoground, hsieh2023sugarcrepe, kamath2023s, tong2024eyes, burapacheep2024colorswap}.  
These benchmarks are typically organized into small groups of images and captions that differ in subtle but systematic ways (e.g., captions with identical words but different orderings).  
The prevailing evaluation metric, the \groupscoretext, requires models to correctly assign each image to its corresponding caption and each caption to its corresponding image via isolated pairwise comparisons.  
While rigorous, this metric is also unforgiving: raw model performance often falls at or below random guessing \citep{thrush2022winoground, diwan2022winoground, tong2024eyes, burapacheep2024colorswap, li2024exploring}.  

Despite recent attempts to improve compositional reasoning in frontier multimodal models \citep{wu2023role, zhang2024vipact, vaishnav2025cognitive}, progress remains modest.  
For instance, the previous state of the art on Winoground---achieved by scaffolding and prompt tuning GPT-4V \citep{wu2023role, vaishnav2025cognitive}---was only 58.75, still well below the estimated human performance of 85.5 \citep{thrush2022winoground}.  
\looseness=-1

Our work takes a complementary perspective to prior efforts by revisiting the evaluation metrics used in compositional reasoning.
We introduce a \emph{group matching score} (\matchscoretext) that evaluates the best overall matching rather than isolated pairwise comparisons, revealing substantial hidden capability in both VLMs and MLLMs.  
Crucially, by simply overfitting to the induced matchings at test time, this hidden capability transfers into higher scores under the original \groupscoretext, closing much of the reported gap.  
With this adjustment, GPT-4.1 improves from 69.75 to 91.38 on Winoground---\emph{yielding the first result to surpass the estimated human performance of 85.5}.  
This finding echoes broader observations that measured capability can be highly sensitive to the choice of evaluation metric \citep{schaeffer2023emergent}, underscoring the need for continued research on evaluation protocols for frontier models.

\paragraph{Test-time training, pseudo-labeling, and adaptive schedules.}
Test-time training adapts models during inference to improve performance, with roots in early work on local learning and instance-specific adaptation \citep{cleveland1979robust, cleveland1988locally, bottou1992local, atkeson1997locally}.
The idea has regained attention in the era of large pretrained models, where test-time self-supervision can enhance performance without additional labeled data \citep{sun2020test, wang2020tent, gandelsman2022test, chen2022contrastive}.
Recent studies show that finetuning on retrieved data based on test prompts can significantly improve large language models \citep{hardt2024testtime, hubotter2025efficiently}, and test-time training has become a key component in tackling reasoning-heavy benchmarks such as ARC \citep{chollet2019measure, chollet2024arc, akyurek2025the}.

Our test-time matching algorithm (\ttmtext) shares this motivation but differs in key aspects.
Most prior methods adapt to each test instance independently, producing per-instance finetuned models and often relying on instance-specific in-context examples \citep{akyurek2025the}.
In contrast, \ttmtext leverages \matchscoretext-induced pseudo-labels across the \emph{entire test set}, iteratively updating a single model through an adaptive thresholding schedule.
This connects naturally to the literature on self-training \citep{kumar2020understanding} and semi-supervised learning \citep{zhu2005semi, chapelle2009semi, sohn2020fixmatch, zhang2021flexmatch, zhang2024labelbench}, where pseudo-labels drive improvements.
A central contribution of our approach is to exploit matching and group structure---both locally (\cref{sec:ttm}) and globally (\cref{sec:global_matching})---to generate high-quality pseudo-labels.

Finally, our adaptive thresholding schedule resonates with classical ideas in active learning \citep{castro2007minimax, balcan2007margin, dasgupta2009analysis, hanneke2014theory, krishnamurthy2019active, puchkin2021exponential, zhu2022active, zhu2022efficient}, though with reversed logic: whereas active learning typically queries the most uncertain examples for human annotation, our approach begins with the most confident pseudo-labels and gradually relaxes thresholds to expand coverage.
This confidence-first perspective is central to the effectiveness of \ttmtext, enabling consistent performance gains without any external supervision.

\section{Discussion}
\label{sec:discussion}

This work revisits the long-standing puzzle of compositional reasoning, where modern multimodal models often appear to perform no better than random guessing \citep{thrush2022winoground, diwan2022winoground, tong2024eyes, burapacheep2024colorswap, li2024exploring}.  
We show that this apparent limitation partly arises from the evaluation metrics themselves, which systematically underestimate model capability.  
We introduce \matchscoretext as a metric correction that yields a more faithful evaluation and can be translated back to the standard metric via a simple overfitting step; under this correction, GPT-4.1 surpasses estimated human performance on Winoground.
Building on this insight, we propose \emph{Test-Time Matching} (\ttmtext), an iterative, self-improving algorithm that further bootstraps model performance without external supervision.  
\ttmtext enables SigLIP-B16 to outperform GPT-4.1 on MMVP-VLM, establishing a new state of the art.  
Experiments across 16 dataset variants demonstrate that \ttmtext consistently improves performance across diverse settings, including those without metric-induced effects or predefined group structures.

Moving forward, we highlight two promising directions:  

\begin{itemize}[leftmargin=10pt, itemindent=*]
  \item \textbf{Recalibrating model evaluation.}  
  The same model on the same dataset can yield vastly different results under different metrics.  
  This underscores the need for more robust, transparent, and reliable evaluation protocols for compositional reasoning and beyond \citep{schaeffer2023emergent}.  

  \item \textbf{Extending \ttmtext beyond compositional reasoning.}  
  While developed in the context of compositional reasoning, the core principle of \ttmtext---iterative, matching-based self-training at test time---is general.  
  A natural next step is to explore this idea in broader multimodal or language settings.
  \looseness=-1
\end{itemize}

\section*{Author contributions}
YZ conceived the project, developed the algorithms, performed the majority of the implementation and experiments, and wrote the manuscript.
JZ and FT assisted with the implementation; JZ additionally conducted experiments on the WhatsUp datasets.

\section*{Acknowledgments}
YZ acknowledges support from NSF IIS 2425006.

\newpage
\bibliography{refs}
\bibliographystyle{plainnat}

\newpage
\appendix

\section{Proofs and supporting results from \cref{sec:methods}}
\label{app:support_methods}

\subsection{Proofs of \cref{prop:group_score_prob} and \cref{prop:match_score_prob}}
\label{app:proofs_group}

\propGroupScoreProb*

\begin{proof}
Because the entries of $s$ are i.i.d. sampled from a continuous distribution (here $\unif([0,1])$), ties occur with probability $0$, so we may use strict inequalities throughout.

Denote $d_i \ldef s_{ii}$ and, for $i\neq j$, set $m_{ij}\ldef \min\{d_i,d_j\}$. 
By the definition of the \groupscoretext, the event $\{\groupscoremath(s)=1\}$ is equivalent to requiring
$s_{ij}<m_{ij}$ and $s_{ji}<m_{ij}$ for every $i\neq j$.
Conditioning on the diagonal $d=(d_1,\dots,d_k)$ and using independence of the off-diagonal entries,
\begin{align*}
\bbP \big(\groupscoremath(s)=1 \mid d\big)
&= \prod_{i<j} \bbP(s_{ij}<m_{ij})\,\bbP(s_{ji}<m_{ij})
= \prod_{i<j} m_{ij}^{\,2}.
\end{align*}

Let $0 < x_1 < \cdots < x_k < 1$ be the order statistics of $(d_1,\dots,d_k)$.
We then have $m_{ij}=x_{\min\{r(i),r(j)\}}$, where $r(\cdot)$ is the rank, hence
\begin{align*}
\prod_{i<j} m_{ij}^{\,2}
= \prod_{a=1}^{k} x_a^{\,2(k-a)} .
\end{align*}
Since $(x_1,\dots,x_k)$ are the order statistics of i.i.d.\ $\unif([0,1])$ samples, their joint density is $k!$ on the ordered region $\{0 < x_1 < \cdots < x_k < 1\}$ (and $0$ elsewhere). Therefore,
\begin{align*}
\bbP \big(\groupscoremath(s)=1\big)
&= k!\int_{0 < x_1 < \cdots < x_k < 1}
\ \prod_{a=1}^{k} x_a^{\,2(k-a)}\,dx_1\cdots dx_k .
\end{align*}

For $1\leq \ell \leq k$ and $y\in(0,1]$, define
\begin{align*}
I_\ell(y)\ldef \int_{0<x_1<\cdots<x_\ell<y}
\ \prod_{a=1}^{\ell} x_a^{\,2(k-a)}\,dx_1\cdots dx_\ell .
\end{align*}
We claim that, for $\ell=1,\dots,k$,
\begin{align*}
I_\ell(y)=\frac{y^{\,\ell(2k-\ell)}}{\prod_{r=1}^{\ell} r(2k-r)} .
\end{align*}
This is proved by induction on $\ell$. For $\ell=1$,
\begin{align*}
I_1(y)=\int_0^y x^{2(k-1)}\,dx=\frac{y^{2k-1}}{2k-1}.
\end{align*}
Assume it holds for $\ell-1$. Then
\begin{align*}
I_\ell(y)
&=\int_0^y x_\ell^{\,2(k-\ell)}\,I_{\ell-1}(x_\ell)\,dx_\ell\\
&=\frac{1}{\prod_{r=1}^{\ell-1} r(2k-r)}
\int_0^y x_\ell^{\,2(k-\ell)+(\ell-1)(2k-(\ell-1))}\,dx_\ell\\
&=\frac{1}{\prod_{r=1}^{\ell-1} r(2k-r)}
\cdot \frac{y^{\,\ell(2k-\ell)}}{\ell(2k-\ell)},
\end{align*}
since $2(k-\ell)+(\ell-1)(2k-(\ell-1))=\ell(2k-\ell)-1$. 
Thus the claim holds. Taking $\ell=k$ and $y=1$ gives
\begin{align*}
\int_{0 < x_1 < \cdots < x_k < 1}
\ \prod_{a=1}^{k} x_a^{\,2(k-a)}\,dx_1\cdots dx_k
= I_k(1)=\frac{1}{\prod_{r=1}^{k} r(2k-r)} .
\end{align*}
Therefore,
\begin{align*}
\bbP \big(\groupscoremath(s)=1\big)
= k!\,\prod_{r=1}^{k}\frac{1}{r(2k-r)}
= \frac{(k-1)!}{(2k-1)!}.
\end{align*}
\end{proof}

\propMatchScoreProb*
\begin{proof}
There are ${k!}$ distinct injective matchings.
Since the random variables $\{s_{ij}\}$ are continuous, ties occur with probability $0$.
By symmetry, each injective matching is equally likely to achieve the maximum total similarity.
Hence, the probability that the ground-truth matching $\pi^\star$ attains the maximum is 
  $\frac{1}{k!}$.
\end{proof}

\subsection{Supporting results for general rectangular groups}
\label{app:rectangular}
Without loss of generality, we consider a group of $m$ images $\{I_i\}_{i=1}^m$ and $k$ captions $\{C_i\}_{i=1}^k$ with $m < k$.
We assume the ground-truth pairings is $\{(I_i, C_i)\}_{i=1}^m$ (hidden from the learner).
As in the main text, we study a random guessing model that assigns i.i.d. similarity scores
$s_{ij} \ldef \simmath(I_i,C_j) \sim \unif([0,1])$ for each pair $(I_i,C_j)$, and collect them into a similarity matrix $s \in \bbR^{m\times k}$.

\paragraph{\groupscoretext for $m \times k$ groups.}
Analogous to the $k\times k$ and $1 \times k$ cases, the \groupscoretext for $m \times k$ groups can be defined as 
\begin{align*}
\groupscoremath(s) \ldef
\begin{cases}
1 & 
  \forall \, i \in [m]:\; s_{ii} > \max_{j \neq i} s_{ij}, \\[6pt]
0 & \text{otherwise}.
\end{cases}
\end{align*}
Under the random guessing model, the probability of achieving a \groupscoretext of 1 for rectangular group is given below.
\begin{proposition}
  \label{prop:group_score_rectangular}
  For random similarity score $s \in \bbR^{m \times k}$,
  $\bbP(\groupscoremath(s) = 1) = \frac{1}{k^m}$.
\end{proposition}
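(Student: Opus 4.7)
The plan is to decompose the event $\{\groupscoremath(s)=1\}$ into a product of independent, simple marginal events, one per row of $s$. Observe that, by definition of the rectangular \groupscoretext, the event holds if and only if for every $i\in[m]$, the diagonal entry $s_{ii}$ is strictly the largest in row $i$, i.e. $s_{ii}>\max_{j\ne i}s_{ij}$. Note that no column-wise condition appears (in contrast to the square case of \cref{prop:group_score_prob}), which is why the answer here is much cleaner.

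First, I would invoke continuity of $\unif([0,1])$ to dismiss ties: the set of matrices with repeated entries has Lebesgue measure zero, so strict vs.\ non-strict maximum is immaterial. Second, I would use the row-wise independence: since all $mk$ entries of $s$ are i.i.d., the rows $s_{i,\cdot}$ are mutually independent random vectors. Therefore
\begin{align*}
\bbP\bigl(\groupscoremath(s)=1\bigr) \;=\; \prod_{i=1}^{m} \bbP\Bigl(s_{ii}>\max_{j\ne i}s_{ij}\Bigr).
\end{align*}
Third, within each row, the $k$ entries $s_{i,1},\dots,s_{i,k}$ are i.i.d.\ and continuous, so by exchangeability each of the $k$ positions is equally likely to attain the row maximum; in particular, $\bbP(s_{ii}>\max_{j\ne i}s_{ij})=1/k$. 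Multiplying across the $m$ rows yields $1/k^m$, as claimed.

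There is no real obstacle here: the proof is essentially a two-line symmetry-plus-independence argument, and it should be presented as such. The only conceptual point worth emphasizing is the contrast with the square case, where the simultaneous row- and column-maximum requirement couples the rows and forces the more delicate order-statistic integration carried out in the proof of \cref{prop:group_score_prob}; dropping the column condition in the rectangular ($m<k$) setting decouples the rows completely and collapses the computation to $(1/k)^m$.
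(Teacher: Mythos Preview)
Your proposal is correct and matches the paper's proof essentially line for line: dismiss ties by continuity, note that the rectangular \groupscoretext decouples into $m$ independent row-wise events, apply exchangeability within each row to get $1/k$, and multiply. The additional commentary contrasting with the coupled square case is a nice expository touch but is not part of the paper's proof.
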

\begin{proof}
  Since the random variables $\{s_{ij}\}$ are continuous, ties occur with probability $0$.  
  For each row $i$, 
by symmetry, 
the probability that $s_{ii}$ is the largest among the $k$ i.i.d. entries $\{s_{ij}\}_{j=1}^k$ is $1/k$.  
Since rows are independent, we have
  $$\bbP (\forall\, i \in [m]: s_{ii} > \max_{j \neq i} s_{ij}) = \prod_{i=1}^m \frac{1}{k} = \frac{1}{k^m}.$$
\end{proof}

\paragraph{\matchscoretext for $m \times k$ groups.}
We extend \matchscoretext to the general rectangular case by considering \emph{injective} matchings 
$\pi:[m]\rightarrow [k]$ (i.e., $\pi(i)\neq\pi(j)$ for $i\neq j$).
With the ground-truth injective matching $\pi^\star:i\mapsto i$, we define
\matchscoretext as  
\begin{align*}
  \matchscoremath(s) \ldef 
  \begin{cases}
    1 & \text{if } \sum_{i=1}^m s_{i,\pi^\star(i)} > \sum_{i=1}^m s_{i,\pi(i)}, \quad \forall\; \pi \neq \pi^\star, \\[3pt]
    0 & \text{otherwise}.
  \end{cases}
\end{align*}

Under the random guessing model, the probability of achieving a \matchscoretext of 1 for rectangular group is given below.
\begin{proposition}
\label{prop:match_score_rectangular}
  For random similarity scores $s \in \bbR^{m \times k}$, 
  $\bbP(\matchscoremath(s) = 1) = \frac{(k - m)!}{k!}$.
\end{proposition}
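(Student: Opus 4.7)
The plan is to reduce the claim to a symmetry argument over the set of injective matchings, exactly as in the proof of \cref{prop:match_score_prob}, but taking care that in the rectangular case ($m<k$) different matchings use different \emph{subsets} of columns rather than all being permutations of a common column set.

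First I would count: the number of injective maps $\pi:[m]\to[k]$ is $\frac{k!}{(k-m)!}$, and since the entries $\{s_{ij}\}$ are i.i.d.\ continuous, ties among the $\frac{k!}{(k-m)!}$ matching sums $S_\pi\ldef\sum_{i=1}^m s_{i,\pi(i)}$ occur with probability $0$. Hence exactly one matching attains the maximum almost surely, and it suffices to show that
\[
\bbP\bigl(\pi=\arg\max_{\pi'} S_{\pi'}\bigr)
\]
does not depend on $\pi$; summing to $1$ over the $\frac{k!}{(k-m)!}$ matchings then yields the claimed probability $\frac{(k-m)!}{k!}$ for $\pi=\pi^\star$.

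Next I would establish the symmetry. Given two injective matchings $\pi,\pi':[m]\to[k]$, I would construct a permutation $\sigma$ of $[k]$ such that $\sigma\circ\pi=\pi'$; this is possible by defining $\sigma(\pi(i))\ldef\pi'(i)$ for $i\in[m]$ (well-defined since $\pi$ is injective) and extending $\sigma$ to a bijection of $[k]$ arbitrarily on the remaining $k-m$ indices. Let $s'$ denote the column-permuted matrix with entries $s'_{ij}\ldef s_{i,\sigma^{-1}(j)}$. Because all entries of $s$ are i.i.d., $s'$ has the same joint distribution as $s$, and by construction $S_{\pi'}(s')=S_\pi(s)$ while the family $\{S_\tau(s')\}_\tau$ is a permutation of $\{S_\tau(s)\}_\tau$. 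Therefore
\[
\bbP\bigl(\pi=\arg\max_\tau S_\tau(s)\bigr)
=\bbP\bigl(\pi'=\arg\max_\tau S_\tau(s')\bigr)
=\bbP\bigl(\pi'=\arg\max_\tau S_\tau(s)\bigr),
\]
which is the desired symmetry. Combined with the counting above, this gives $\bbP(\matchscoremath(s)=1)=\frac{(k-m)!}{k!}$.

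The only subtle point is the non-disjointness of the column supports across matchings: two matchings $\pi\neq\pi'$ typically share some of the entries $s_{ij}$, so the sums $\{S_\pi\}_\pi$ are \emph{not} independent (unlike a naive product-of-uniforms argument would suggest). The argument above sidesteps this by never using independence of the $S_\pi$'s; it only uses that the joint distribution of $s$ is invariant under column permutations, which is immediate from the i.i.d.\ assumption. This is exactly the same mechanism that powers \cref{prop:match_score_prob} ($m=k$), where $(k-m)!/k!=1/k!$ recovers that special case.
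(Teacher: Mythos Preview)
Your proposal is correct and follows the same symmetry argument as the paper: count the $\tfrac{k!}{(k-m)!}$ injective matchings, note ties have probability zero by continuity, and conclude that each matching is equally likely to be the maximizer. The paper simply writes ``by symmetry'' for the equal-likelihood step, whereas you spell out the column-permutation invariance explicitly (and correctly handle the non-independence of the sums $S_\pi$), so your version is a more detailed realization of the same idea.
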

\begin{proof}
There are $\frac{k!}{(k-m)!}$ distinct injective matchings.
Since the random variables $\{s_{ij}\}$ are continuous, ties occur with probability $0$.
By symmetry, each injective matching is equally likely to achieve the maximum total similarity.
Hence, the probability that the ground-truth matching $\pi^\star$ attains the maximum is 
  $\big(\frac{k!}{(k - m)!}\big)^{-1} = \frac{(k - m)!}{k!}$.
\end{proof}

\paragraph{\matchscoretext helps for rectangular groups.}
For random similarity scores $s\in\bbR^{m\times k}$,
\begin{align*}
\bbP \big(\matchscoremath(s)=1\big)
= \frac{(k-m)!}{k!}
= \frac{1}{k(k-1)\cdots(k-m+1)}
\geq \frac{1}{k^m}
= \bbP\big(\groupscoremath(s)=1\big),
\end{align*}
with strict inequality for any $m\geq 2$ and equality for $m=1$ (\matchscoretext and \groupscoretext coincide when $m=1$).
Moreover, if the ground-truth injective matching $\pi^\star$ is identified, overfitting to the matching $\pi^\star$ at test time guarantees a \groupscoretext of $1$.
Thus, as in the square case, one can improve model performance under \groupscoretext via \simplematchtext: 
(i) selecting the most likely matching under \matchscoretext and (ii) 
overfitting to the matching at test time to transfer gains.

\section{Additional experimental details and results}
\label{app:exp}

\subsection{Implementation details and hyperparameters}
\label{app:exp_hyper}

\subsubsection{General training hyperparameters}
We summarize the general training setup and hyperparameter choices used in our experiments below.
Across all experiments, we use AdamW \citep{loshchilov2017decoupled} with $(\beta_1, \beta_2) = (0.9, 0.999)$ and weight decay $0.05$. 
The learning rate follows a cosine decay schedule and is restarted at each iteration with a multiplicative factor of $0.95$. 
Optimizer states are reset at each restart, with the exception of SigLIP-B16 on Winoground.

\begin{itemize}
  \item 
  For contrastive vision-language models, we set the number of iterations to $T = 10$ by default; additional results with fewer iterations are reported in \cref{app:runtime}. At each iteration, we train for 20 epochs by default, except on Winoground, where we train for 30 epochs.
We use a batch size of 50 for $2\times 2$ datasets and 100 for $1\times k$ datasets; the batch size is defined at the group level (e.g., 50 groups of size $2\times 2$ per batch).\footnote{We slightly increase the batch size when the total number of groups is just above a multiple of the default size. For instance, if the dataset contains 102 groups, we set the batch size to 51.}  

  \item For generative multimodal models, we set the number of iterations to $T=3$ and train for 10 epochs per iteration.
  We use a batch size of 16 for MMVP-VLM and ColorSwap datasets and a batch size of 32 for Winoground.
  The batch size here is also defined at the group level.
  Since the pseudo-labeled dataset includes both ``Yes'' and ``No'' responses (\cref{sec:smolvlm}), each $2 \times 2$ group provides 4 data points for training.
  \looseness=-1
\end{itemize}

By default, we do not apply data augmentation during training, as many datasets are designed to be sensitive to location or color.
However, we find it beneficial to apply a simple resizing (factor $1.1$) followed by random cropping for the following dataset-model pairs: Winoground with SigLIP-L16, MMVP-VLM with SigLIP-B16, ColorSwap with SigLIP-B16, MMVP-VLM with SigLIP-B16 under global matching, and CLIP-B32 with \whatsup A-Left-Right.

\subsubsection{\ttmtext-specific hyperparameters}

In \cref{tab:hyperparams,tab:hyperparams_smolvlm,tab:hyperparams_sugarcrepe_whatsup,tab:hyperparams_global}, we report, for each dataset-model pair, the initial threshold $\tau_1$, the final threshold $\tau_T$, the threshold decay schedule (linear or cosine), and the learning rate (lr).
For group matching (\cref{tab:hyperparams,tab:hyperparams_smolvlm,tab:hyperparams_sugarcrepe_whatsup}), we use absolute thresholds. 
For global matching (\cref{tab:hyperparams_global}), 
we adopt the percentile-based thresholding mentioned in \cref{sec:global_matching}: at iteration $t$, the top $1-\tau_t$ fraction of pseudo-labels (ranked by similarity) is selected.

In our experiments, the final threshold $\tau_T$ is set to either $0$ (full coverage) or $0.1$ (typically covering more than 90\% of the data). 
The initial threshold $\tau_1$ is more dataset- and model-dependent.
For group matching, we find it effective to set $\tau_1$ such that roughly 15\%–30\% of the groups are matched initially,
though in some cases we use thresholds outside this range when they yield better performance (e.g., higher selection fractions for ColorSwap with SigLIP models and lower fractions for \whatsup $2\times 2$ variants with CLIP-B32).
For global matching, performance tends to improve with a larger initial selection fraction---typically around 50\%.

\begin{table}[h]
\centering
  \caption{Hyperparameters used for experiments in \cref{sec:exp_2x2}.}
\label{tab:hyperparams}
\setlength{\tabcolsep}{8pt}
\begin{tabular}{l l c c c c}
\toprule
Dataset & Model & $\tau_1$ & $\tau_T$ & Schedule & lr \\
\midrule
\multirow{3}{*}{Winoground} 
  & CLIP-B16   & 0.9 & 0   & linear & $2.0 \times 10^{-5}$ \\
  & SigLIP-B16 & 2.0 & 0   & linear & $1.0 \times 10^{-5}$ \\
  & SigLIP-L16 & 2.0 & 0.1 & cosine & $4.0 \times 10^{-5}$ \\
\addlinespace[3pt]
\multirow{2}{*}{MMVP-VLM} 
  & CLIP-B16   & 2.0 & 0   & linear & $1.0 \times 10^{-5}$ \\
  & SigLIP-B16 & 2.0 & 0.1 & cosine & $2.0 \times 10^{-5}$ \\
\addlinespace[3pt]
\multirow{3}{*}{ColorSwap} 
  & CLIP-B16   & 2.3 & 0   & cosine & $4.0 \times 10^{-5}$ \\
  & SigLIP-B16 & 1.0 & 0   & cosine & $4.0 \times 10^{-5}$ \\
  & SigLIP-L16 & 2.5 & 0   & cosine & $4.0 \times 10^{-5}$ \\
\bottomrule
\end{tabular}
\end{table}

\begin{table}[h]
\centering
  \caption{Hyperparameters used for experiments in \cref{sec:smolvlm}.}
\label{tab:hyperparams_smolvlm}
\setlength{\tabcolsep}{8pt}
\begin{tabular}{l l c c c c}
\toprule
Dataset & Model & $\tau_1$ & $\tau_T$ & Schedule & lr \\
\midrule
Winoground & SmolVLM-256M-Instruct & 0.1 & 0   & linear & $4.0 \times 10^{-5}$ \\
MMVP-VLM   & SmolVLM-256M-Instruct & 0.1 & 0   & linear & $6.0 \times 10^{-5}$ \\
ColorSwap  & SmolVLM-256M-Instruct & 0.25 & 0   & linear & $6.0 \times 10^{-5}$ \\
\bottomrule
\end{tabular}
\end{table}

\begin{table}[h]
\centering
  \caption{Hyperparameters used for experiments in \cref{sec:exp_without_metric_boost}.}
\label{tab:hyperparams_sugarcrepe_whatsup}
\setlength{\tabcolsep}{8pt}
\begin{tabular}{l l c c c c}
\toprule
\textbf{Variant} & \textbf{Model} & $\tau_1$ & $\tau_T$ & Schedule & lr \\
\midrule
Replace Relation   & SigLIP-B16 & 2.1  & 0   & cosine & $1.0 \times 10^{-5}$ \\
Swap Attribute     & SigLIP-B16 & 1.8  & 0   & cosine & $1.0 \times 10^{-5}$ \\
Swap Object        & SigLIP-B16 & 2.0  & 0   & cosine & $1.0 \times 10^{-5}$ \\
Add Attribute      & SigLIP-B16 & 2.5  & 0   & cosine & $1.0 \times 10^{-5}$ \\
\addlinespace[3pt]
\whatsup A (1$\times$4) & CLIP-B32 & 0.55 & 0   & linear & $1.0 \times 10^{-5}$ \\
\whatsup B (1$\times$4) & CLIP-B32 & 0.80 & 0   & linear & $1.0 \times 10^{-5}$ \\
\addlinespace[3pt]
A-Left-Right (2$\times$2)    & CLIP-B32 & 0.25 & 0   & linear & $1.0 \times 10^{-5}$ \\
A-On-Under (2$\times$2)      & CLIP-B32 & 0.85 & 0   & linear & $1.0 \times 10^{-5}$ \\
B-Left-Right (2$\times$2)    & CLIP-B32 & 0.50 & 0   & cosine & $2.0 \times 10^{-5}$ \\
B-Front-Behind (2$\times$2)  & CLIP-B32 & 1.30 & 0   & cosine & $2.0 \times 10^{-5}$ \\
\bottomrule
\end{tabular}
\end{table}

\begin{table}[h]
\centering
  \caption{Hyperparameters used for experiments in \cref{sec:exp_global_matching}. 
We adopt percentile-based thresholding: at iteration $t$, the top $1-\tau_t$ fraction of pseudo-labels (ranked by similarity) is selected.}
\label{tab:hyperparams_global}
\setlength{\tabcolsep}{8pt}
\begin{tabular}{l l c c c c}
\toprule
Dataset & Model & $\tau_1$ & $\tau_T$ & Schedule & lr \\
\midrule
Winoground  & SigLIP-B16 & 0.50 & 0 & linear & $1.0 \times 10^{-5}$ \\
MMVP-VLM    & SigLIP-B16 & 0.55 & 0 & linear & $2.0 \times 10^{-5}$ \\
ColorSwap   & SigLIP-B16 & 0.50 & 0 & linear & $4.0 \times 10^{-5}$ \\
\bottomrule
\end{tabular}
\end{table}

\subsection{Complete results from \cref{sec:exp_without_metric_boost}}
\label{app:exp_results}

We present complete empirical results for \cref{fig:exp_1xk} below in \cref{tab:sugarcrepe,tab:whatsup}.
Following \citet{li2024exploring}, we further convert the \whatsup datasets into four directional variants with $2 \times 2$ group structures and present results in \cref{tab:whatsup_variants}: 
\cref{alg:ttm} again yields significant improvements---\textbf{up to 135.1\% relative gains and 95.5\% relative error reduction}---on top of \simplematchtext. 
Together, these results demonstrate that \ttmtext is broadly effective across both $k \times k$ and $1 \times k$ settings, even in cases where evaluation metrics themselves cannot induce gains.

\begin{table}[h]
\centering
  \caption{Performance on SugarCrepe datasets ($1 \times 2$ groups) without metric-induced boosts: for $1 \times k$ groups, \groupscoretext and \matchscoretext coincide. 
  Raw SigLIP-B16 performance is reported under \groupscoretext, 
and \ttmtext corresponds to the performance of \cref{alg:ttm}. 
  We report absolute gains ($\Delta$), relative gains, and relative error reductions of \ttmtext over the raw model performance.}
  \label{tab:sugarcrepe}
\begin{tabular}{l c c c c}
\toprule
{Datasets} & {\siglipbase} & {\ttmtext} & $\Delta$ & {Error Reduction} \\
\midrule
Replace Relation & 70.48 & \cellcolor{lightblue}\textbf{76.23\,\scriptsize$\pm$\,0.51} & \cellcolor{lightblue}{+}\,\textbf{5.8} \, (\textbf{8.2\%} $\uparrow$) & \cellcolor{lightblue}\textbf{19.5\%} $\downarrow$ \\
Swap Attribute  & 71.47 & \cellcolor{lightblue}\textbf{77.36\,\scriptsize$\pm$\,0.71} & \cellcolor{lightblue}{+}\,\textbf{5.9} \, (\textbf{8.2\%} $\uparrow$) & \cellcolor{lightblue}\textbf{20.6\%} $\downarrow$ \\
Swap Object     & 60.41 & \cellcolor{lightblue}\textbf{66.12\,\scriptsize$\pm$\,2.06} & \cellcolor{lightblue}{+}\,\textbf{5.7} \, (\textbf{9.5\%} $\uparrow$) & \cellcolor{lightblue}\textbf{14.4\%} $\downarrow$ \\
Add Attribute   & 83.67 & \cellcolor{lightblue}\textbf{88.95\,\scriptsize$\pm$\,0.83} & \cellcolor{lightblue}{+}\,\textbf{5.3} \, (\textbf{6.3\%} $\uparrow$) & \cellcolor{lightblue}\textbf{32.3\%} $\downarrow$ \\
\bottomrule
\end{tabular}
\end{table}

\begin{table}[h]
\centering
  \caption{Performance on \whatsup A/B datasets ($1 \times 4$ groups) without metric-induced boosts: for $1 \times k$ groups, \groupscoretext and \matchscoretext coincide. 
  Raw CLIP-B32 performance is reported under \groupscoretext, 
and \ttmtext corresponds to the performance of \cref{alg:ttm}. 
  We report absolute gains ($\Delta$), relative gains, and relative error reductions of \ttmtext over the raw model performance.}
  \label{tab:whatsup}
\begin{tabular}{l c c c c}
\toprule
  {Datasets} & {\clipbaselow} & {\ttmtext} & $\Delta$ & {Error Reduction} \\
\midrule
\whatsup A & 30.58 & \cellcolor{lightblue}\textbf{56.8\,\scriptsize$\pm$\,1.84} & \cellcolor{lightblue}{+}\,\textbf{26.2} \, (\textbf{85.7\%} $\uparrow$) & \cellcolor{lightblue}\textbf{37.7\%} $\downarrow$ \\
\whatsup B & 30.88 & \cellcolor{lightblue}\textbf{49.94\,\scriptsize$\pm$\,2.58} & \cellcolor{lightblue}{+}\,\textbf{19.1} \, (\textbf{61.7\%} $\uparrow$) & \cellcolor{lightblue}\textbf{27.6\%} $\downarrow$ \\
\bottomrule
\end{tabular}
\end{table}

\begin{table}[h]
\centering
  \caption{Performance on \whatsup $2 \times 2$ directional variants: LR: left-right, OU: on-under; FB: front-behind. 
Raw CLIP-B32 performance is reported under \groupscoretext. 
\simplematchtext corresponds to the performance under \matchscoretext (\cref{sec:metric}), 
and \ttmtext corresponds to the performance of \cref{alg:ttm}. 
  We report absolute gains ($\Delta$), relative gains, and relative error reductions of \ttmtext over \simplematchtext.}
  \label{tab:whatsup_variants}
\begin{tabular}{l c c c c c}
\toprule
  {Datasets} & {\clipbaselow} & {\simplematchtext} & {\ttmtext} & $\Delta$ & {Error Reduction} \\
\midrule
A-LR 
  & 0 
  & 40.78 
  & \cellcolor{lightblue}\textbf{95.87\,\scriptsize$\pm$\,4.42} 
  & \cellcolor{lightblue}{+}\,\textbf{55.1} \, (\textbf{135.1\%} $\uparrow$) 
  & \cellcolor{lightblue}\textbf{93.0\%} $\downarrow$ \\
A-OU 
  & 3.88 
  & 78.64 
  & \cellcolor{lightblue}\textbf{99.03\,\scriptsize$\pm$\,0} 
  & \cellcolor{lightblue}{+}\,\textbf{20.4} \, (\textbf{25.9\%} $\uparrow$) 
  & \cellcolor{lightblue}\textbf{95.5\%} $\downarrow$ \\
B-LR 
  & 0 
  & 55.88 
  & \cellcolor{lightblue}\textbf{82.84\,\scriptsize$\pm$\,0.49} 
  & \cellcolor{lightblue}{+}\,\textbf{27.0} \, (\textbf{48.2\%} $\uparrow$) 
  & \cellcolor{lightblue}\textbf{61.1\%} $\downarrow$ \\
B-FB 
  & 0 
  & 47.06 
  & \cellcolor{lightblue}\textbf{66.67\,\scriptsize$\pm$\,1.30} 
  & \cellcolor{lightblue}{+}\,\textbf{19.6} \, (\textbf{41.7\%} $\uparrow$) 
  & \cellcolor{lightblue}\textbf{37.0\%} $\downarrow$ \\
\bottomrule
\end{tabular}
\end{table}

\subsection{Runtime analysis and effectiveness of \ttmtext under a small number of iterations}
\label{app:runtime}

\paragraph{Runtime analysis.}
The runtime of TTM scales as $O(T \cdot C_{\ftmath})$, where $T$ is the number of iterations and $C_{\ftmath}$ denotes the model finetuning cost. While TTM also requires selecting the maximum-similarity matching within each group, this cost is dominated by the finetuning cost. Specifically:

\begin{itemize}
  \item 
The per-group matching cost is $P(k) = O(k^3)$ for $k \times k$ groups (via the Hungarian algorithm) or $P(k) = O(k)$ for $1 \times k$ groups. Since compositional benchmarks use very small group sizes ($k=2$ for $k \times k$ groups or $k=4$ for $1 \times k$ groups), we can safely treat $P(k) = O(1)$.
  \item Matching over all $n$ groups therefore costs $O(n)$, which is dominated by the model finetuning cost $C_{\ftmath}$.
\end{itemize}
Thus, the overall runtime is dominated by finetuning and scales as $O(T \cdot C_{\ftmath})$. In practice, the finetuning cost $C_{\ftmath}$ can be further reduced via efficient finetuning techniques \citep{hu2022lora}.

\paragraph{Effectiveness of \ttmtext under A small number of iterations.}
While we use $T=10$ in our main experiments with contrastive vision-language models, we also evaluate \ttmtext with $T=3$ and $T=5$. As shown \cref{tab:ttm_iterations}, \ttmtext continues to yield substantial gains even with only $T=3$ or $T=5$ iterations.

\begin{table}[h]
\centering
\caption{
  Performance on Winoground, MMVP-VLM, and ColorSwap under varying number of iterations ($T$).
Raw model performance is reported under \groupscoretext.
\simplematchtext corresponds to the performance under \matchscoretext (\cref{sec:metric}),
and \ttmtext corresponds to the performance of \cref{alg:ttm}.
We report absolute gains ($\Delta$), relative gains, and relative error reductions of \ttmtext over \simplematchtext.
}
\label{tab:ttm_iterations}
\begin{tabular}{l c c c c c}
\toprule
{Dataset (Iterations)} & {Raw} & {\simplematchtext} & {\ttmtext} & $\Delta$ & {Error Red.} \\
\midrule
\multicolumn{6}{l}{\textbf{Winoground}} \\
\quad SigLIP-B16 ($T=3$)  & 10.25 & 67.00 &
  \cellcolor{lightblue}\textbf{71.50\,\scriptsize$\pm$\,1.19} &
  \cellcolor{lightblue}{+}\,\textbf{4.5}\,(\textbf{6.7\%}\,$\uparrow$) &
  \cellcolor{lightblue}\textbf{13.6\%}\,$\downarrow$ \\
\quad SigLIP-B16 ($T=5$)  & 10.25 & 67.00 &
  \cellcolor{lightblue}\textbf{71.88\,\scriptsize$\pm$\,1.48} &
  \cellcolor{lightblue}{+}\,\textbf{4.9}\,(\textbf{7.3\%}\,$\uparrow$) &
  \cellcolor{lightblue}\textbf{14.8\%}\,$\downarrow$ \\
\quad SigLIP-B16 ($T=10$) & 10.25 & 67.00 &
  \cellcolor{lightblue}\textbf{72.50\,\scriptsize$\pm$\,0.64} &
  \cellcolor{lightblue}{+}\,\textbf{5.5}\,(\textbf{8.2\%}\,$\uparrow$) &
  \cellcolor{lightblue}\textbf{16.7\%}\,$\downarrow$ \\
\midrule
\multicolumn{6}{l}{\textbf{MMVP-VLM}} \\
\quad SigLIP-B16 ($T=3$)  & 22.96 & 81.48 &
  \cellcolor{lightblue}\textbf{85.19\,\scriptsize$\pm$\,0.91} &
  \cellcolor{lightblue}{+}\,\textbf{3.7}\,(\textbf{4.6\%}\,$\uparrow$) &
  \cellcolor{lightblue}\textbf{20.0\%}\,$\downarrow$ \\
\quad SigLIP-B16 ($T=5$)  & 22.96 & 81.48 &
  \cellcolor{lightblue}\textbf{87.04\,\scriptsize$\pm$\,1.11} &
  \cellcolor{lightblue}{+}\,\textbf{5.6}\,(\textbf{6.8\%}\,$\uparrow$) &
  \cellcolor{lightblue}\textbf{30.0\%}\,$\downarrow$ \\
\quad SigLIP-B16 ($T=10$) & 22.96 & 81.48 &
  \cellcolor{lightblue}\textbf{89.44\,\scriptsize$\pm$\,0.96} &
  \cellcolor{lightblue}{+}\,\textbf{8.0}\,(\textbf{9.8\%}\,$\uparrow$) &
  \cellcolor{lightblue}\textbf{43.0\%}\,$\downarrow$ \\
\midrule
\multicolumn{6}{l}{\textbf{ColorSwap}} \\
\quad SigLIP-B16 ($T=3$)  & 30.33 & 88.00 &
  \cellcolor{lightblue}\textbf{93.58\,\scriptsize$\pm$\,1.01} &
  \cellcolor{lightblue}{+}\,\textbf{5.6}\,(\textbf{6.3\%}\,$\uparrow$) &
  \cellcolor{lightblue}\textbf{46.5\%}\,$\downarrow$ \\
\quad SigLIP-B16 ($T=5$)  & 30.33 & 88.00 &
  \cellcolor{lightblue}\textbf{93.58\,\scriptsize$\pm$\,0.86} &
  \cellcolor{lightblue}{+}\,\textbf{5.6}\,(\textbf{6.3\%}\,$\uparrow$) &
  \cellcolor{lightblue}\textbf{46.5\%}\,$\downarrow$ \\
\quad SigLIP-B16 ($T=10$) & 30.33 & 88.00 &
  \cellcolor{lightblue}\textbf{94.25\,\scriptsize$\pm$\,0.43} &
  \cellcolor{lightblue}{+}\,\textbf{6.3}\,(\textbf{7.1\%}\,$\uparrow$) &
  \cellcolor{lightblue}\textbf{52.1\%}\,$\downarrow$ \\
\bottomrule
\end{tabular}
\vspace{-3pt}
\end{table}

\end{document}